\newtheorem{definition}{Definition}
\newtheorem{theorem}{Theorem}
\newtheorem{remark}{Remark}
\newtheorem{example}{Example}
\newcommand{\R}{\mathbb{R}}
\newcommand{\Rn}{\mathbb{R}^n}
\newcommand{\polyCons}{s} 		% number of polytope halfspace constraints
\newcommand{\numObs}{o}			% number of unsafe sets
\newcommand{\numInt}{z}			% number of intersecting obstacles
\newcommand{\numDist}{z}		% number of disturbances
\newcommand{\pieces}{N}			% number of piecewise constant segments
\newcommand{\ind}{r}
\newcommand{\subRL}{a}
\newcommand{\captionmath}[1]{{\fontfamily{rmdefault}\selectfont \boldmath$#1$}}
\DeclareMathSymbol{\shortminus}{\mathbin}{AMSa}{"39}
\newcommand\mydots{\hbox to 1em{.\hss.\hss.\hss}}
\newcommand\eatpunct[1]{}
\title{Provably Safe Reinforcement Learning via Action Projection using Reachability Analysis and Polynomial Zonotopes} 
\author{Niklas Kochdumper$^{*,1,2}$, Hanna Krasowski$^{*,1}$, Xiao Wang$^{*,1}$, Stanley Bak$^2$, and Matthias Althoff$^1$
\thanks{$^*$The first three authors contributed equally.}
\thanks{$^1$Department of Computer Science, Technical University of Munich, 85748 Garching, Germany}
\thanks{$^2$Department of Computer Science, Stony Brook University, Stony Brook, NY 11794 USA}
}
\begin{document}
	
\maketitle
\thispagestyle{empty}
\pagestyle{empty}

%\corresp{CORRESPONDING AUTHORS: N. Kochdumper, H. Krasowski, X. Wang (e-mail: \href{mailto:niklas.kochdumper@stonybrook.edu}{\{niklas.kochdumper,hanna.krasowski,xiao.wang\}@tum.de})}

\begin{abstract}
	While reinforcement learning produces very promising results for many applications, its main disadvantage is the lack of safety guarantees, which prevents its use in safety-critical systems. In this work, we address this issue by a safety shield for nonlinear continuous systems that solve reach-avoid tasks. Our safety shield prevents applying potentially unsafe actions from a reinforcement learning agent by projecting the proposed action to the closest safe action. This approach is called action projection and is implemented via mixed-integer optimization. The safety constraints for action projection are obtained by applying parameterized reachability analysis using polynomial zonotopes, which enables to accurately capture the nonlinear effects of the actions on the system. In contrast to other state-of-the-art approaches for action projection, our safety shield can efficiently handle input constraints and dynamic obstacles, eases incorporation of the spatial robot dimensions into the safety constraints, guarantees robust safety despite process noise and measurement errors, and is well suited for high-dimensional systems, as we demonstrate on several challenging benchmark systems.
\end{abstract}

%\keywords{
%	Reinforcement learning, action projection, reachability analysis, reach-avoid problems.
%}

% -------------------------------------------------------
% INTRODUCTION
% -------------------------------------------------------

\section{Introduction} 
\label{sec:intro}

Reinforcement learning has been successfully applied to find solutions for many challenging applications, such as robotics \cite{Zhao2020}, autonomous driving \cite{Kiran2021}, and power systems \cite{Zhang2019}. Many of these applications are safety-critical, so that the lack of safety guarantees for standard reinforcement learning controllers prevents their deployment in the real world. We aim to overcome this limitation with a novel safety shield for reinforcement learning agents that considers the very general case of disturbed nonlinear continuous systems with input constraints that have to avoid dynamic obstacles. Note that our safety shield can be applied to arbitrary unsafe controllers, while reinforcement learning is the main focus of this work.

\begin{figure}[!tb]
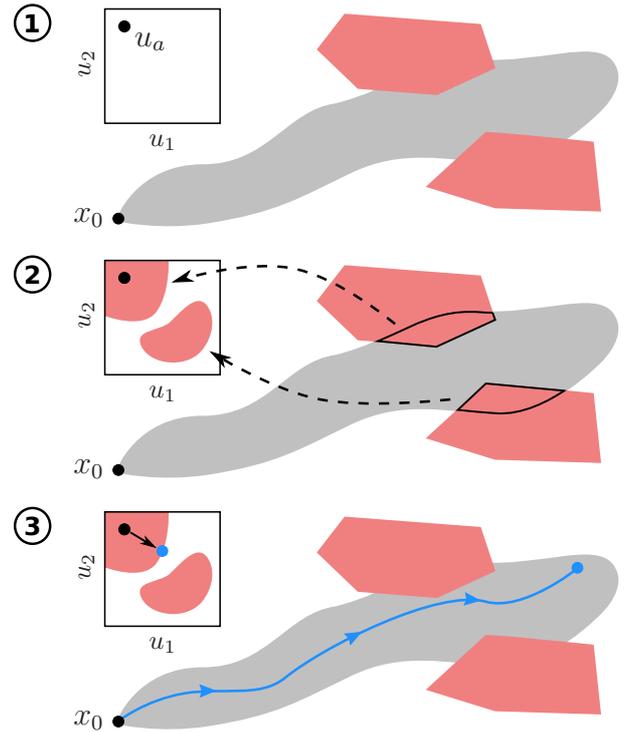

	\centering                                                                                        
	\setlength{\belowcaptionskip}{-15pt}
	\psfragfig[width=0.98\columnwidth]{./figures/reachAvoidProblemAll}{
		\psfrag{a}[c][c]{$u_1$}
		\psfrag{b}[c][c]{\rotatebox[origin=c]{90}{$u_2$}}
		\psfrag{c}[c][c]{$\mathlarger{\mathlarger{x_0}}$}
		\psfrag{u}[c][c]{$\mathlarger{\mathlarger{u_{\subRL}}}$}}
	\vspace{10pt}
	\caption{Steps for action projection using parameterized reachability analysis, where the reachable set is depicted in gray and the unsafe regions are shown in red: 1) Computation of the reachable set for all actions starting from the current state \captionmath{x_0}. 2) Extraction of action constraints from the intersections between the reachable set and unsafe regions. 3) Projection of the action \captionmath{u_{\subRL}} outputted by the agent to the closest safe action.} 
	\label{fig:overview}
\end{figure}

\subsection{State of the Art}

We first provide a summary of the current state of the art in safety-related methods of reinforcement learning.
The term \textit{safe reinforcement learning} refers to approaches that aim to obtain safe agents, but do not provide hard safety guarantees.
One example for this is constrained reinforcement learning \cite{Achiam2017,Yang2019}, where the objective of the training phase is to maximize the reward while satisfying safety constraints. While advantages of this technique are that no system model is required and that even complex temporal logic safety specifications %\cite{Giacomo2019} 
\cite{Hasanbeig2020,Wang2022} can be considered, the obvious disadvantage is that hard safety guarantees can be provided during neither training nor deployment. The same is true for probabilistic approaches \cite{Konighofer2021,Geibel2005} that aim to identify the safety probability of an action. Overall, safe reinforcement learning techniques can be used for non-critical applications, where unsafe actions do not cause major damage; however, these methods are not suited for safety-critical systems.

In contrast to safe reinforcement learning, \textit{provably safe reinforcement learning} approaches provide hard safety guarantees. They can be divided into the three main categories: \textit{action masking}, \textit{action replacement}, and \textit{action projection} \cite{Krasowski2022}. In action masking \cite{Krasowski2020,Isele2018}, a mask that only allows the agent to choose actions from the set of safe actions is applied. One disadvantage of this method is that it is often hard to explicitly compute the set of safe actions, especially for continuous action spaces, where the set of safe actions often has a very complex non-convex shape as shown in Fig.~\ref{fig:overview}. In addition, it is non-trivial to correctly consider the masking during training so that the reinforcement learning algorithm is not perturbed \cite{Huang2022}.
For action replacement \cite{Thumm2022,Saunders2018,Hunt2021,Alshiekh2018}, unsafe actions returned by the agent are replaced by safe actions. As replacement, one can either use a single safe action obtained from a failsafe planner \cite{Thumm2022} or via human feedback \cite{Saunders2018} or one can sample from the set of safe actions \cite{Hunt2021,Alshiekh2018}. Also the well-known simplex architecture \cite{Seto1998,Phan2020,Schurmann2021}, where a safe controller is used as a backup for an unsafe controller, can be categorized as action replacement. One disadvantage of action replacement is that the difference between the original action and the replacement action can be very large, which might prevent the agent from completing its task. Action projection tries to avoid this issue by finding the safe action that is closest to the action suggested by the agent.

Since our approach applies action projection, we discuss this category in more detail. The most prominent methods for action projection are \textit{control barrier functions} \cite{Cheng2019,Marvi2021}, \textit{model predictive control} \cite{Bastani2021,Wabersich2021}, and \textit{parameterized reachability analysis} \cite{Shao2021}. A control barrier function is a level-set function that divides the state space into a safe and a potentially unsafe region. Here, action projection is formulated as an optimization problem, where the correction of the action is minimized, such that the system stays inside the safe region defined by the control barrier function.
While an advantage is that control barrier functions can for static environments guarantee safety for infinite time, the method also has several disadvantages: 1) It is often not easy to find a suitable control barrier function, especially in the presence of dynamic obstacles. 2) Control barrier functions are often quite conservative since they usually exclude many states that are safe. 3) The approach is often limited to control affine systems because the optimization problems would otherwise become non-convex. 4) It is challenging to consider input constraints as well as process noise and measurement errors. The second method is model predictive control, which also formulates the projection as an optimization problem, but uses the safety constraint that the system should not enter any unsafe regions for a certain finite prediction horizon, which avoids the requirement for a control barrier function.
However, one downside is that it is often not possible to guarantee that the solution is robustly safe despite process noise and measurement errors since for nonlinear systems these uncertainties usually cannot be encoded directly into the optimization problem. Our safety shield is based on the parameterized reachability analysis approach, which is visualized in Fig.~\ref{fig:overview}: The first step is to compute the reachable set for all available actions. Since this reachable set is parameterized by the actions, one can directly extract the safety constraints for action projection from the intersection between the reachable sets and the unsafe regions. Since process noise as well as measurement errors can conveniently be integrated into reachability analysis, this approach is very well suited for guaranteeing robust safety.

Due to its advantageous properties, several approaches apply reachability analysis to guarantee safety. One method \cite{Gillula2012} uses the Hamilton-Jacobi reachability framework \cite{Mitchell2005} to compute the backward reachable set starting from the unsafe sets --- a state is safe for all possible actions if it is outside of the backward reachable set. This has the disadvantage that for each unsafe set a different backward reachable set has to be computed. Moreover, the Hamilton-Jacobi framework requires gridding the state space so that the computational complexity of the approach grows exponentially with the system dimension. Another method \cite{Selim2022} applies reachability analysis for black-box systems and uses a differentiable collision check that is based on constrained zonotopes \cite{Scott2016} to efficiently push the reachable set for the proposed action away from unsafe sets. This, however, has the drawback that the reachable set has to be recomputed after each correction update of the action, which is computationally demanding. The method closest to our approach is a \textit{reachability-based trajectory safeguard} \cite{Shao2021}, which computes the parameterized reachable set for a simplified trajectory-generating model and determines a safe action satisfying the constraints extracted from the reachable set via random sampling. While this approach can be computationally efficient for some systems, sampling methods often fail to find feasible solutions, especially in high-dimensional action spaces.

\subsection{Contributions and Outline}

We present a novel safety shield that is based on action projection using parameterized reachability analysis. 
This safety shield extends our previous work on dependency preserving reachability analysis \cite{Althoff2013a,Kochdumper2020c,Kochdumper2019} by a method for correcting unsafe actions, and we additionally also study the effect online verification has on the learning process. 
Unlike the related approach in \cite{Shao2021}, our safety shield directly operates on the original nonlinear system model rather than on a simplified trajectory-generating model. Moreover, in contrast to \cite{Shao2021}, we use conservative polynomialization \cite{Althoff2013a} instead of conservative linearization \cite{Althoff2008c} for reachability analysis, which enables us to efficiently capture the nonlinear effects the actions have on the system. Another advantage over \cite{Shao2021} is that we use mixed-integer optimization instead of random sampling for projection, which always finds the action with the smallest correction. Finally, the various design choices provided by our safety shield enable the user to fine-tune its performance for the considered application.

The remainder of this paper is structured as follows: After introducing some preliminaries in Sec.~\ref{sec:prelim}, we provide the problem definition in Sec.~\ref{sec:problem}. Our main contribution is the reachability-based safety shield for reinforcement learning presented in Sec.~\ref{sec:main}, for which we discuss several extensions in Sec.~\ref{sec:impr}. 
Finally, we demonstrate our approach on several numerical examples in Sec.~\ref{sec:exp} and conclude with a discussion of its properties in Sec.~\ref{sec:disc}. 

\section{Preliminaries}
\label{sec:prelim}

We first introduce our notation and define the set representations that we use in this paper.

\subsection{Notation}

Sets are denoted by calligraphic letters, matrices by uppercase letters, and vectors by lowercase letters. Given a vector $a \in \R^n$, $a_{(i)}$ is the $i$-th entry and the p-norm is denoted by $\|a \|_p$. Given a matrix $A \in \mathbb{R}^{n \times m}$, $A_{(i,\cdot)}$ represents the $i$-th matrix row, $A_{(\cdot,j)}$ the $j$-th column, and $A_{(i,j)}$ the $j$-th entry of matrix row $i$. The concatenation of two matrices $C$ and $D$ is denoted by $[C~D]$, $I_n \in \R^{n \times n}$ is the identity matrix, and the symbols $\mathbf{0}$ and $\mathbf{1}$ represent vectors of zeros and ones of proper dimension. We further introduce an $n$-dimensional interval as $\mathcal{I} := [\underline{x},\overline{x}],~ \forall i ~ \underline{x}_{(i)} \leq \overline{x}_{(i)},~ \underline{x},\overline{x} \in \mathbb{R}^n$. Given two sets $\mathcal{S}_1,\mathcal{S}_2 \subset \Rn$, their Minkowski sum is $\mathcal{S}_1 \oplus \mathcal{S}_2 = \{s_1 + s_2~|~ s_1 \in \mathcal{S}_1,~s_2 \in \mathcal{S}_2 \}$ and their Cartesian product is $\mathcal{S}_1 \times \mathcal{S}_2 = \big \{[s_1^T~s_2^T]^T~\big|~ s_1 \in \mathcal{S}_1,~s_2 \in \mathcal{S}_2 \big\}$.

\subsection{Set Representations}

Our approach relies on several different set representations, which we introduce here. Let us begin with polytopes, for which we use the halfspace representation: 
\begin{definition}[Polytope]
	Given a constraint matrix $A \in \R^{\polyCons \times n}$ and a constraint offset $b \in \R^\polyCons$, the halfspace representation of a polytope $\mathcal{P} \subseteq \Rn$ is\begin{equation*}
		\mathcal{P} := \big\{ x \in \Rn~\big|~ A \,x \leq b \big\}.
	\end{equation*}
	We use the shorthand $\mathcal{P} = \langle A, b\rangle_P$.
\end{definition}
Zonotopes are a special type of polytopes that can be represented efficiently using generators:
\begin{definition}[Zonotope] \label{def:zonotope}
	Given a center vector $c \in \Rn$ and a generator matrix $G \in \R^{n \times p}$, a zonotope $\mathcal{Z} \subset \Rn$ is 
	\begin{equation*}
		\mathcal{Z} := \bigg \{ c + \sum_{i=1}^p G_{(\cdot,i)} \, \alpha_i ~ \bigg| ~ \alpha_i \in [\shortminus 1,1] \bigg\}
	\end{equation*}
	with so-called factors $\alpha_i$. We use the shorthand $\mathcal{Z} = \langle c,G\rangle_Z$.
\end{definition}
An extension to zonotopes are polynomial zonotopes \cite{Althoff2013a}, which can represent non-convex sets. We use the sparse representation of polynomial zonotopes \cite{Kochdumper2019}\footnote{In contrast to \cite[Def.~1]{Kochdumper2019} we do not integrate the constant offset $c$ into $G$. Moreover, we omit the identifier vector used in \cite{Kochdumper2019} for simplicity}:
\begin{definition}[Polynomial Zonotope] \label{def:polynomialZonotope}
	Given a constant offset $c \in \Rn$, a generator matrix of dependent generators $G \in \mathbb{R}^{n \times h}$, a generator matrix of independent generators $G_I \in \mathbb{R}^{n \times q}$, and an exponent matrix $E \in \mathbb{N}_{0}^{p \times h}$, a polynomial zonotope $\mathcal{PZ} \subset \Rn$ is  
	\begin{equation*}
		\begin{split}
			\mathcal{PZ} := \bigg\{ c + \sum _{i=1}^h \bigg( \prod _{k=1}^p \alpha _k ^{E_{(k,i)}} \bigg) G_{(\cdot,i)} + & \sum _{j=1}^{q} \beta _j G_{I(\cdot,j)}  \\
			& \bigg|~ \alpha_k, \beta_j \in [\shortminus 1,1] \bigg\}.
		\end{split}
	\end{equation*}
	The scalars $\alpha_k$ are called dependent factors and $\beta_j$ independent factors. We use the shorthand $\mathcal{PZ} = \langle c, G, G_I, E \rangle_{PZ}$.
\end{definition}
Polynomial zonotopes can equivalently represent intervals, zonotopes, polytopes, and Taylor models \cite[Sec.~II.B]{Kochdumper2019}. Moreover, due to their polynomial nature, they are closely related to polynomial level sets:
\begin{definition}[Polynomial Level Set] 
	Given a vector of coefficients $a \in \R^h$, an offset $b \in \R$, and an exponent matrix $E \in \mathbb{N}_{0}^{n \times h}$, a polynomial level set $\mathcal{LS} \subseteq \Rn$ is
	\begin{equation*}
		\mathcal{LS} := \bigg \{ x \in \Rn ~\bigg |~ \sum_{i=1}^h \bigg( \prod _{k=1}^n x_{(k)} ^{E_{(k,i)}} \bigg) a_{(i)} \leq b  \bigg \}.
	\end{equation*}
	We use the shorthand $\mathcal{LS} = \langle a,b,E \rangle_{LS}$.
\end{definition}

\section{Problem Formulation}
\label{sec:problem}

We consider general nonlinear disturbed systems with input constraints defined by the ordinary differential equation
\begin{equation}
	\dot x(t) = f\big(x(t),u(t),w(t)\big),
	\label{eq:system}
\end{equation}
where $x(t) \in \Rn$ is the system state, $u(t) \in \R^m$ is the control input, $w(t) \in \R^\numDist$ is the process noise, $f:~\Rn \times \R^m \times \R^\numDist \to \Rn$ is a Lipschitz continuous function, and $t \in \R^+$ is time. The process noise is bounded by a compact set $w(t) \in \mathcal{W} \subset \R^\numDist$ and the system has to satisfy the input constraints defined by the convex set $u(t) \in \mathcal{U} \subseteq \R^m$. The set $\mathcal{W}$ can for example be determined from measurements of the real physical system using conformance checking \cite{Roehm2018a}.

Given a nonlinear system defined as in \eqref{eq:system}, the goal is to solve a reach-avoid problem, where the system state should be steered from the current state $x_0 = x(0)$ to a goal set $\mathcal{G} \subseteq \Rn$ while avoiding collisions with potentially time-varying unsafe sets $\mathcal{F}_i \subset \Rn$, $i = 1,\dots,\numObs$, where $\numObs$ denotes the number of unsafe sets. In case the measurements of the system state are subject to a measurement error $v(t) \in \mathcal{V}$, the goal becomes to steer all states in the set $x_0 \oplus \mathcal{V}$ to the goal set.
We aim to solve reach-avoid problems with reinforcement learning, where we train an agent to return the control inputs $u_a(t)$ for a given state $x(t)$ steering the system to the goal set while avoiding obstacles. However, we have no guarantee that the behavior learned by the agent is safe. Therefore, we add a safety shield that is based on reachability analysis to obtain formal guarantees:
\begin{definition}[Reachable Set] 
	Let $\xi(t,x_0,u(\cdot),w(\cdot))$ denote the solution of \eqref{eq:system} at time $t$ for an initial state $x_0 = x(0)$, control input trajectory $u(\cdot)$ and process noise trajectory $w(\cdot)$. The reachable set at time $t$ is
	\begin{equation*}
		\begin{split}
			\mathcal{R}(t) := \big \{ \xi \big(t,x_0,u(\cdot),w(\cdot)\big)~\big |~&x_0 \in \mathcal{X}_0, \\ 
			& \forall \tau \in [0,t]:~w(\tau) \in \mathcal{W} \big \},
		\end{split}
	\end{equation*} 
	where $\mathcal{X}_0 \subset \Rn$ is the initial set and $\mathcal{W} \subset \R^\numDist$ is the set of process noise.
	\label{def:reachSet}
\end{definition}
For our safety shield, we consider that $\mathcal{U}$, $\mathcal{W}$, and $\mathcal{V}$ are represented as zonotopes, and $\mathcal{G}$ and $\mathcal{F}_i$ are represented as polytopes in halfspace representation. Moreover, we use polynomial zonotopes to represent reachable sets. In case other agents are present in the environment, we can apply set-based methods \cite{Koschi2020} to safely predict their future behavior and obtain the corresponding time-varying unsafe sets.

\section{Safety Shield}
\label{sec:main}

As visualized in Fig.~\ref{fig:overview}, the high-level idea behind our safety shield is to compute the reachable set for a time horizon of $t_f$ and the set of all control inputs satisfying the input constraints $\forall t \in [0,t_f]: \, u(t) \in \mathcal{U}$ rather than a single control input trajectory $u(\cdot)$. The intersection of this reachable set with the unsafe sets then yields constraints that define safe control inputs, which we can use to formulate the projection of the control input $u_{\subRL}$ provided by the reinforcement policy to the closest safe control input as an optimization problem. We first consider input trajectories that are constant over time for simplicity and discuss more advanced control strategies later in Sec.~\ref{subsec:controlLaw}. For constant control inputs, we can compute the reachable set using the extended system dynamics 
\begin{equation}
	\begin{bmatrix} \dot x(t) \\ \dot u(t) \end{bmatrix} = \begin{bmatrix} f\big( x(t),u(t),w(t) \big) \\ \mathbf{0} \end{bmatrix}
	\label{eq:extSys}
\end{equation}
together with the initial set $\mathcal{X}_0 = x_0 \times \mathcal{U}$, where we omit the set of measurement errors $\mathcal{V}$ for simplicity.
For reachability analysis, we use the conservative polynomialization algorithm \cite{Althoff2013a}, which encloses the nonlinear dynamics in \eqref{eq:system} by a differential inclusion $\dot x \in p(x(t),u(t),w(t)) \oplus \mathcal{E}$ consisting of a polynomial approximation $p(x(t),u(t),w(t))$ and the abstraction error $\mathcal{E}$. This reachability algorithm explicitly preserves dependencies between the initial states and the reachable states \cite{Kochdumper2020c}. Since with the extended system dynamics in \eqref{eq:extSys}, the control inputs become part of the system state, we can therefore directly determine from the reachable set which control inputs steer the system to unsafe regions. Let us demonstrate this dependency preservation by an example:   
\begin{example}
	As a running example we consider the system
	\begin{equation*}
		\begin{split}
			\dot x_1 = 4 + 2 \, x_2 \, u_1 + w_1, ~ \dot x_2 = 1.7 + u_1 \, u_2
		\end{split}
	\end{equation*}
	with initial state $x_0 = [0~0]^T$, set of process noise $\mathcal{W} = [-0.01, 0.01]$, and time horizon $t_f = 1\,\si{\second}$. Moreover,  
	\begin{equation*}
		\mathcal{F} = \big \langle \big[[\shortminus 4 ~\shortminus 1]^T ~[\shortminus 1 ~ \shortminus 4]^T \, \big], [\shortminus 14 ~ \shortminus 8]^T \big \rangle_P
	\end{equation*}
	is the unsafe set and 
	\begin{equation*}
		\mathcal{U} = \bigg \{ \begin{bmatrix} \shortminus 0.5 \\ 1 \end{bmatrix} + \begin{bmatrix}  0.5 \\ 0 \end{bmatrix} \alpha_1 + \begin{bmatrix} 0 \\  1 \end{bmatrix} \alpha_2 ~ \bigg| ~ \alpha_1,\alpha_2 = [\shortminus1, 1] \bigg \}
	\end{equation*}
	is the set of control inputs. With the conservative polynomialization algorithm we obtain the final reachable set
	\begin{equation*}
		\begin{split}
			& \mathcal{R}(t_f) = \bigg \{ \hspace{-2pt} \begin{bmatrix} 3.4 \\ 1.2 \end{bmatrix}\hspace{-2pt} + \hspace{-2pt}\begin{bmatrix} 0.34 \\ 0.5 \end{bmatrix}\hspace{-2pt} \alpha_1 + \hspace{-2pt}\begin{bmatrix} 0.25 \\ \shortminus 0.5 \end{bmatrix}\hspace{-2pt} \alpha_2 + \hspace{-2pt}\begin{bmatrix} \shortminus 0.49 \\ 0.5 \end{bmatrix}\hspace{-2pt} \alpha_1 \alpha_2 \, + \\
			& \begin{bmatrix} 0.25 \\ 0 \end{bmatrix} \hspace{-2pt} \alpha_1^2 + \hspace{-2pt}\begin{bmatrix} 0.25 \\ 0 \end{bmatrix} \hspace{-2pt} \alpha_1^2 \alpha_2 + \hspace{-2pt} \begin{bmatrix} 0.1 \\ 0 \end{bmatrix}\hspace{-2pt} \beta_1 \,\bigg| \, \alpha_1,\alpha_2,\beta_1 \in [\shortminus 1,1] \bigg\},
		\end{split}
	\end{equation*}
	which is visualized in Fig.~\ref{fig:runningExample}. Since the reachable set $\mathcal{R}(t_f)$ and the input set $\mathcal{U}$ are parameterized by the same factors $\alpha_1$ and $\alpha_2$, we have a direct analytical relation between the control inputs and the corresponding reachable states.
	\label{ex:runningExample}
\end{example}

\begin{figure}[!tb]
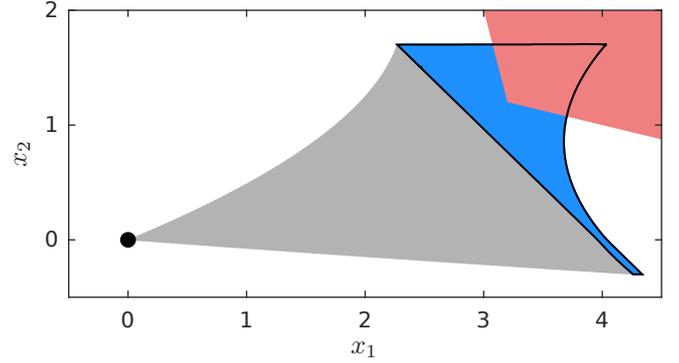

	\centering
	\setlength{\belowcaptionskip}{-15pt}
	\psfragfig[width=0.99\columnwidth]{./figures/runningExample}{
		\psfrag{a}[c][c]{$x_1$}
		\psfrag{b}[c][c]{\rotatebox[origin=c]{180}{$x_2$}}}
	\caption{Reachable set for the system from Example~\ref{ex:runningExample}, where the initial state \captionmath{x_0} is shown as a black dot, the final reachable set \captionmath{\mathcal{R}(t_f)} is depicted in blue with a black border, and the unsafe set \captionmath{\mathcal{F}} is shown in red.}
	\label{fig:runningExample}
\end{figure}

We now exploit the analytical relation between the control inputs and the reachable states to determine the set of safe control inputs. As demonstrated in the example above, the control input $u(t) \in \mathcal{U} = \langle c_u,G_u\rangle_Z$ is unambiguously defined by the factors $\alpha$ via the relation $u = c_u + G_u \alpha$ through the definition of a zonotope in Def.~\ref{def:zonotope}. Instead of determining the set of safe control inputs directly, we therefore determine the safe set for $\alpha$ instead, since this simplifies the computations as it becomes apparent later. The independent generators of the polynomial zonotope $\mathcal{R}(t_f)$ represent uncertainties that results from abstraction errors during reachability analysis as well as from the process noise. Consequently, a control input is safe only if the reachable set does not intersect the unsafe sets for any possible value of the independent factors $\beta_j$. We formulate this in the following theorem, which extends our previous results for unsafe sets given as halfspaces \cite[Sec.~4.1]{Kochdumper2020c} to the more general case of polytopes:

\begin{theorem} \label{prop:intersect}
	Given is an unsafe set $\mathcal{F} = \langle A,b \rangle_P \subset \Rn$ consisting of $\polyCons$ halfspace constraints and the reachable set $\mathcal{R}(t) = \langle c,G,G_I,E \rangle_{PZ} \subset \R^n$ of the system in \eqref{eq:extSys} computed with the conservative polynomialization algorithm \cite{Althoff2013a} for the initial set $\mathcal{X}_0 = x_0 \times \mathcal{U}$, $x_0 \in \Rn$, $\mathcal{U} = \langle c_u,G_u\rangle_Z \subset \R^m$ and the set of process noise $\mathcal{W} \subset \R^{\numDist}$. The following constraints on the zonotope factors $\alpha$ that parameterize the control input ensure that there exists no trajectory that enters the unsafe set:
	\begin{equation*}
		\begin{split}
			\forall \alpha \in [\shortminus \mathbf{1},\mathbf{1}] \cap \bigcup_{l=1}^{\polyCons} \mathcal{LS}_l, ~ & \forall w(\cdot) \in \mathcal{W}: \\[-10pt]
			&  ~~ \xi(t,x_0,c_u + G_u \alpha,w(\cdot)) \notin \mathcal{F}
		\end{split}
	\end{equation*}
	with
	\begin{equation*}
		\mathcal{LS}_l = \bigg \langle \hspace{-5pt}\shortminus A_{(l,\cdot)} G, A_{(l,\cdot)} c - \sum_{j=1}^q \big|A_{(l,\cdot)} G_{I(\cdot,j)}\big|- b_{(l)} , E \bigg \rangle_{LS}
	\end{equation*}
	for $l = 1,\dots,\polyCons$.
\end{theorem}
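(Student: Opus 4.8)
The plan is to reduce the statement to a single slice of the polynomial zonotope and then to a worst-case analysis over the independent factors $\beta$. First I would exploit the dependency preservation of the conservative polynomialization algorithm~\cite{Althoff2013a,Kochdumper2020c}. Because the extended dynamics~\eqref{eq:extSys} lift the control input into the state with $\dot u = \mathbf{0}$ and the initial set $\mathcal{X}_0 = x_0 \times \mathcal{U}$ shares its factors with $\mathcal{U} = \langle c_u, G_u \rangle_Z$, each dependent factor $\alpha_k$ appearing in $\mathcal{R}(t) = \langle c,G,G_I,E\rangle_{PZ}$ is \emph{the same} factor that selects the input $u = c_u + G_u\alpha$. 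Hence, fixing $\alpha$ to a concrete admissible value $\alpha^\ast \in [\shortminus\mathbf{1},\mathbf{1}]$ and letting only the independent factors vary yields the set
\[
	\mathcal{S}(\alpha^\ast) := \Big\{\, c + g(\alpha^\ast) + \sum_{j=1}^{q}\beta_j\,G_{I(\cdot,j)} ~\Big|~ \beta_j \in [\shortminus 1,1]\,\Big\},
\]
where $g(\alpha^\ast) := \sum_{i=1}^{h}\big(\prod_{k=1}^{p}(\alpha^\ast_k)^{E_{(k,i)}}\big)G_{(\cdot,i)}$. Since the independent generators $G_I$ collect exactly the abstraction error and the process-noise contributions, $\mathcal{S}(\alpha^\ast)$ over-approximates every reachable endpoint $\xi(t,x_0,c_u+G_u\alpha^\ast,w(\cdot))$ with $w(\cdot)\in\mathcal{W}$ (cf.~Def.~\ref{def:reachSet}). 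It therefore suffices to prove that $\alpha^\ast \in \bigcup_{l}\mathcal{LS}_l$ implies $\mathcal{S}(\alpha^\ast)\cap\mathcal{F}=\emptyset$.

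Next I would use the halfspace structure of $\mathcal{F}=\langle A,b\rangle_P$: a point lies outside $\mathcal{F}$ as soon as one of its $\polyCons$ row constraints is violated, $A_{(l,\cdot)}x > b_{(l)}$. Hence, to certify $\mathcal{S}(\alpha^\ast)\cap\mathcal{F}=\emptyset$ it suffices to exhibit a single index $l$ for which the \emph{entire} slice satisfies $A_{(l,\cdot)}x \ge b_{(l)}$. This is exactly the role of the union $\bigcup_{l=1}^{\polyCons}\mathcal{LS}_l$ and is where the generalization from the single-halfspace result~\cite[Sec.~4.1]{Kochdumper2020c} to a full polytope enters.

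It then remains to match each $\mathcal{LS}_l$ with a worst-case bound on the $l$-th constraint over the slice. For fixed $l$, the value $A_{(l,\cdot)}x$ is affine in $\beta$, so its minimum over $\beta\in[\shortminus 1,1]^q$ is attained at $\beta_j=\shortminus\operatorname{sign}(A_{(l,\cdot)}G_{I(\cdot,j)})$ and equals $A_{(l,\cdot)}(c+g(\alpha^\ast)) - \sum_{j=1}^{q}\big|A_{(l,\cdot)}G_{I(\cdot,j)}\big|$. Substituting $A_{(l,\cdot)}g(\alpha^\ast)=\sum_{i=1}^{h}\big(\prod_{k=1}^{p}(\alpha^\ast_k)^{E_{(k,i)}}\big)(A_{(l,\cdot)}G)_{(i)}$ and comparing with the polynomial level set definition shows that the inequality
\[
	\sum_{i=1}^{h}\Big(\prod_{k=1}^{p}(\alpha^\ast_k)^{E_{(k,i)}}\Big)(\shortminus A_{(l,\cdot)}G)_{(i)} \le A_{(l,\cdot)}c - \sum_{j=1}^{q}\big|A_{(l,\cdot)}G_{I(\cdot,j)}\big| - b_{(l)}
\]
that defines $\alpha^\ast\in\mathcal{LS}_l$ rearranges term by term into $\min_{\beta}A_{(l,\cdot)}x \ge b_{(l)}$. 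Consequently the whole slice stays in $\{x\mid A_{(l,\cdot)}x\ge b_{(l)}\}$, no point of $\mathcal{S}(\alpha^\ast)$ lies in $\mathcal{F}$, and the claim follows via the first step.

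I expect the main obstacle to lie in the first step rather than in the algebra: rigorously justifying that slicing $\mathcal{R}(t)$ at $\alpha^\ast$ over-approximates the true reachable endpoints for the fixed input $u=c_u+G_u\alpha^\ast$ hinges on the dependency-preserving correctness of conservative polynomialization and on verifying that the factors introduced by $\mathcal{U}$ remain dependent factors, rather than being absorbed into $G_I$, throughout the reachability computation. A minor secondary subtlety is that the level set yields the non-strict bound $A_{(l,\cdot)}x\ge b_{(l)}$; boundary contact with $\mathcal{F}$ is then handled by treating $\mathcal{F}$ as open or by an arbitrarily small margin, which leaves the over-approximative safety guarantee intact.
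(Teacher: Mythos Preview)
Your proposal is correct and follows essentially the same route as the paper: both arguments combine the polytope halfspace decomposition (outside $\mathcal{F}$ iff some row constraint is violated) with the dependency-preservation slice of the polynomial zonotope at a fixed $\alpha$ (the paper cites \cite[Thm.~1]{Kochdumper2020c} for this), and then take the worst case over the independent factors $\beta$ to produce the $\sum_j |A_{(l,\cdot)}G_{I(\cdot,j)}|$ term, which rearranges into the level-set description of $\mathcal{LS}_l$. Your discussion is in fact a bit more explicit than the paper's about the $\beta$-minimization and the strict/non-strict boundary subtlety, but the underlying argument is the same.
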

\begin{proof}
	A single point $x \in \Rn$ is located outside the unsafe set $\mathcal{F}$ if it is fully located outside of at least one halfspace: 
	\begin{equation}
		\bigvee_{l=1}^{\polyCons} A_{(l,\cdot)} \,x  > b_{(l)} \Rightarrow x \not\in \mathcal{F}.
		\label{eq:proof1}
	\end{equation}
	Moreover, due to dependency preservation of reachability analysis, it holds according to \cite[Thm.~1]{Kochdumper2020c} that the disturbed trajectory $\xi(t,x_0,c_u + G_u \alpha,w(\cdot))$ for a specific control input $u = c_u + G_u \alpha$ is contained inside the reachable subset obtained by restricting the factors $\alpha_k \in [\shortminus 1,1]$ in the definition of polynomial zonotopes in Def.~\ref{def:polynomialZonotope} to the corresponding concrete value for $\alpha = [\alpha_1~\dots~\alpha_p]^T$:
	\begin{equation*}
		\begin{split}
			& \forall \alpha_k \in [\shortminus 1,1]: \, \xi(t,x_0,c_u + G_u \alpha,w(\cdot)) \in  \\
			& c + \sum _{i=1}^h \bigg( \prod _{k=1}^p \alpha _k ^{E_{(k,i)}} \hspace{-3pt}\bigg) G_{(\cdot,i)} + \bigg\{ \hspace{-3pt} \sum _{j=1}^{q} \beta _j G_{I(\cdot,j)} \, \bigg| \, \beta_j \in [\shortminus 1,1] \bigg\}.
		\end{split}
		\label{eq:proof2}
	\end{equation*}
	Finally, combining this with \eqref{eq:proof1} under the consideration that the constraints should hold for all values of the independent factors $\beta_j$ yields 
	\begin{equation*}
		\begin{split}
			& \forall \alpha_k,\beta_j \in [\shortminus 1,1]: \bigvee_{l=1}^{\polyCons} A_{(l,\cdot)} c + \sum _{i=1}^h \hspace{-3pt} \bigg( \prod _{k=1}^p \alpha _k ^{E_{(k,i)}} \hspace{-3pt}\bigg) A_{(l,\cdot)}G_{(\cdot,i)} \\
			& + \hspace{-1pt}\sum _{j=1}^{q} \beta _j A_{(l,\cdot)}G_{I(\cdot,j)} \hspace{-1pt}> b_{(l)} \hspace{-1pt}\Rightarrow \xi(t,x_0,c_u \hspace{-1pt}+\hspace{-1pt} G_u \alpha,w(\cdot)) \hspace{-1pt}\not\hspace{-1pt} \in \mathcal{F},
		\end{split}
	\end{equation*}
	which results in the statement of the theorem after bringing the constant offset and the independent generators to the other side of the inequality.
\end{proof}

\begin{remark}
	A geometric interpretation of Thm.~\ref{prop:intersect} is that we first bloat the obstacle $\mathcal{F}$ by the uncertainty given by the independent generators through pushing each polytope halfspace outward. Next, we obtain the constraints via intersecting with the part of the polynomial zonotope spanned by the dependent generators, where the intersection between each halfspace of the bloated polytope $\mathcal{F}$ corresponds to a polynomial level set constraint for the factors $\alpha$. 
\end{remark}

Thm.~\ref{prop:intersect} defines a feasible region $\alpha \in [\shortminus \mathbf{1},\mathbf{1}] \cap \bigcup_l \mathcal{LS}_l$ for the factors $\alpha$ that parameterize the control input such that the intersection between a reachable set at a specific point in time and a single unsafe set is empty. However, to guarantee safety we have to consider the reachable set for the whole time horizon $t \in [0, t_f]$, which consists of a sequence of reachable sets $\mathcal{R}(\tau_0),\mathcal{R}(\tau_1), \dots, \mathcal{R}(\tau_f)$ for consecutive time intervals $\tau_0,\tau_1, \dots, \tau_f$. Moreover, we might also have more than one unsafe set. So overall we obtain one feasible region $\alpha \in [\shortminus \mathbf{1},\mathbf{1}] \cap \bigcup_l \mathcal{LS}_l$ for each pair of reachable sets and unsafe sets resulting in an intersection. The feasible region for $\alpha$ to guarantee safety for all time intervals and all unsafe sets is given by the intersection of the feasible regions for single pairs:
\begin{equation*}
	\alpha \in [\shortminus \mathbf{1},\mathbf{1}] \cap\bigcap_{\ind=1}^{\numInt} \bigcup_{l=1}^{\polyCons_{\ind{}}} \underbrace{\langle a_{\ind{}l}, b_{\ind{}l}, E_{\ind{}l} \rangle_{LS}}_{\mathcal{LS}_{\ind{}l}},
\end{equation*}
where the level sets $\mathcal{LS}_{\ind{}l}$ are obtained from Thm.~\ref{prop:intersect} and $\numInt$ is the number of intersecting pairs. To efficiently check if a reachable set represented by a polynomial zonotope intersects an obstacle represented by a polytope, the polynomial zonotope refinement algorithm \cite{Bak2022} can be used. This algorithm recursively splits the polynomial zonotope along the longest generator until the intersection with the polytope can either be proven or disproven using zonotope enclosures of the split polynomial zonotopes. Overall, given a vector of factors $\alpha_{\subRL} \in \R^p$ that corresponds to the control input $u_{\subRL} = c_u + G_u \alpha_{\subRL} \in \mathcal{U} = \langle c_u,G_u\rangle_Z$ provided by the reinforcement learning policy, we can formulate the projection to the closest safe control input as an optimization problem:
\begin{equation*}
	\min_{\alpha \in [\shortminus \mathbf{1},\mathbf{1}]} \| \alpha - \alpha_{\subRL} \|_2^2 ~~~ \text{s.t.} ~~~ \alpha \in \bigcap_{\ind{}=1}^{\numInt} \bigcup_{l=1}^{\polyCons_\ind{}} \langle a_{\ind{}l}, b_{\ind{}l}, E_{\ind{}l} \rangle_{LS}.
\end{equation*}
This is a disjunctive programming problem, which can be formulated as a mixed-integer quadratic program with polynomial constraints using the convex hull relaxation \cite{Grossmann2003}:
\begin{equation}
	\min_{\alpha \in [\shortminus \mathbf{1},\mathbf{1}]} \| \alpha - \alpha_{\subRL} \|_2^2
	\label{eq:optimize}
\end{equation}
subject to
\begin{equation*}
	\begin{split}
		& ~\sum_{i=1}^h \bigg( \prod_{k=1}^p \alpha_{\ind{}l(k)}^{E_{\ind{}l(k,i)}} \Big) a_{\ind{}l(\cdot,k)} \lambda_{\ind{}l} \leq \shortminus b_{\ind{}l}\,\lambda_{\ind{}l}, \\
		&\lambda_{\ind{}l} \in \{0,1\}, ~\alpha = \sum_{l=1}^{\polyCons_\ind{}} \lambda_{\ind{}l} \, \alpha_{\ind{}l}, ~ \sum_{l=1}^{\polyCons_\ind{}} \lambda_{\ind{}l} = 1, 
	\end{split}
\end{equation*}
for $\ind{} = 1,\dots,\numInt$ and $l = 1,\mydots,\polyCons_\ind{}$. Here, the disjunction is realized using the binary variables $\lambda_{\ind{}l} \in \{0,1\}$ which modify the corresponding polynomial constraints to be either active ($\lambda_{\ind{}l} = 1$) or inactive ($\lambda_{\ind{}l} = 0$). Let us demonstrate the optimization for our running example:
\begin{example} \label{ex:optimization}
	As shown in Fig.~\ref{fig:runningExample}, for the nonlinear system in Example~\ref{ex:runningExample} only the final reachable set $\mathcal{R}(t_f)$ intersects the unsafe set $\mathcal{F}$. We consequently obtain the feasible region for $\alpha$ by applying Thm.~\ref{prop:intersect} to the sets $\mathcal{R}(t_f)$ and $\mathcal{F}$, which yields $\alpha \in \mathcal{LS}_1 \vee \mathcal{LS}_2$ with
	\begin{equation*}
		\begin{split}
			\mathcal{LS}_1 = \big \{ [\alpha_1~\alpha_2]^T \in \R^2 \, \big| \, 1.86 \, \alpha_1 &+  0.5 \, \alpha_2 - 1.46 \, \alpha_1 \alpha_2  \\
			& + \alpha_1^2 + \alpha_1^2 \alpha_2 \leq \shortminus 1.2 \big \} 
		\end{split}
	\end{equation*}
	and
	\begin{equation*}
		\begin{split}
			\mathcal{LS}_2 = \big \{ [\alpha_1~\alpha_2]^T  \in \R^2 \, \big| &\, 2.34 \, \alpha_1 - 1.75 \, \alpha_2 + 1.51 \, \alpha_1 \alpha_2 \\ 
			& + 0.25 \, \alpha_1^2 + 0.25 \, \alpha_1^2 \alpha_2 \leq \shortminus 0.3 \big \}.
		\end{split} 
	\end{equation*}
	Given $\alpha_{\subRL} = [0.3~0]^T$, the optimization problem \eqref{eq:optimize} becomes
	\begin{equation*}
		\min_{\alpha_1,\alpha_2 \in [\shortminus 1,1]} (\alpha_1 - 0.3)^2 \, + \, \alpha_2^2 
	\end{equation*}
	subject to
	\begin{eqnarray*}
		& \hspace{-1cm}1.86 \, \alpha_{11(1)}\lambda_{11} + 0.5 \, \alpha_{11(2)}\lambda_{11} - 1.46 \, \alpha_{11(1)} \alpha_{11(2)}\lambda_{11}   \\
		& \hspace{1.7cm} + \alpha_{11(1)}^2 \lambda_{11}+ \alpha_{11(1)}^2 \alpha_{11(2)} \lambda_{11}\leq \shortminus 1.2 \, \lambda_{11},\\[5pt]
		& \hspace{-0.8cm} 2.34 \, \alpha_{12(1)}\lambda_{12} - 1.75 \, \alpha_{12(2)} \lambda_{12}+ 1.51 \, \alpha_{12(1)} \alpha_{12(2)}\lambda_{12} \\
		& \hspace{0.7cm} + 0.25 \, \alpha_{12(1)}^2\lambda_{12} + 0.25 \, \alpha_{12(1)}^2 \alpha_{12(2)}\lambda_{12} \leq \shortminus 0.3 \, \lambda_{12},\\[5pt]
		& \lambda_{11},\lambda_{12} \in \{0,1\}, ~~  \lambda_{11} + \lambda_{12} = 1 \\[5pt]
		& \begin{bmatrix} \alpha_1 \\ \alpha_2 \end{bmatrix} = \lambda_{11} \begin{bmatrix} \alpha_{11(1)} \\ \alpha_{11(2)} \end{bmatrix} + \lambda_{12} \begin{bmatrix} \alpha_{12(1)} \\ \alpha_{12(2)} \end{bmatrix},
	\end{eqnarray*}
	which has the optimal solution $\alpha = [\alpha_1~\alpha_2]^T = [0.04~0.2]^T$. The feasible regions for $\alpha_1$ and $\alpha_2$ are shown in Fig.~\ref{fig:optimization}.
\end{example}

In the presence of measurement errors $v(t) \in \mathcal{V}$ we can apply the same overall approach but have to change the initial set to $\mathcal{X}_0 = (x_0 \oplus \mathcal{V}) \times \mathcal{U}$, where the set $\mathcal{V}$ has to be represented by independent generators to ensure that safety is guaranteed for all possible values of the measurement errors. While we focused on the conservative polynomialization algorithm \cite{Althoff2013a} for simplicity, our safety shield is also compatible with other reachability approaches as long as they preserve dependencies between initial states and reachable states. This is for example the case for algorithms that compute reachable sets using the Picard-Lindel\"of iteration together with Taylor models \cite{Chen2012}. 

The safety shield can be used during reinforcement learning or for a learned agent. For every decision step, the action suggested by the agent is corrected to the closest safe action by \eqref{eq:optimize} only if it violates safety constraints. If the safety shield is used during learning, it can be beneficial to adapt the reward to inform the agent about corrections of actions \cite{Krasowski2022}.

\begin{figure}[!tb]
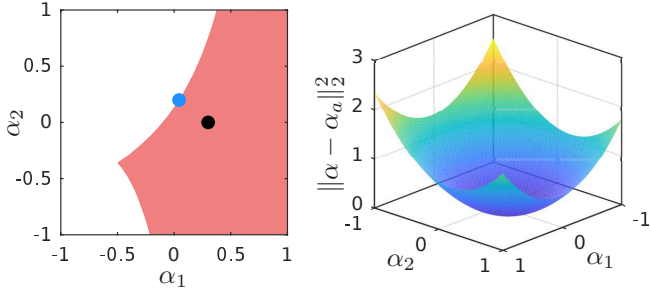

	\centering
	\setlength{\belowcaptionskip}{-15pt}
	\psfragfig[width=0.98\columnwidth]{./figures/optimization}{
		\psfrag{a}[c][c]{$\alpha_1$}
		\psfrag{b}[c][c]{\rotatebox[origin=c]{180}{$\alpha_2$}}
		\psfrag{c}[c][c]{\rotatebox[origin=c]{180}{$\|\alpha - \alpha_{\subRL} \|_2^2$}}
		\psfrag{d}[c][c]{$\alpha_2$}}
	%\vspace{0pt}
	\caption{Domain (left) and objective function (right) for the optimization problem from Example~\ref{ex:optimization}. For the domain plot the set of infeasible values is shown in red, the desired value \captionmath{\alpha_{\subRL}} is visualized as a black dot, and the optimal solution to the optimization problem is depicted as a blue dot.} 
	\label{fig:optimization}
\end{figure}

\section{Extensions}
\label{sec:impr}
We now discuss several extensions for our safety shield.

\subsection{Different Types of Control Laws}
\label{subsec:controlLaw}

For the basic safety shield presented in Sec.~\ref{sec:main}, for simplicity we considered that the control input is kept constant for the whole planning horizon. Since this is very restrictive and would in practice often prevent us from finding a feasible solution, we now discuss how to realize more advanced control strategies. Note that the reinforcement learning agent has to match the control law used for the safety shield.

\paragraph{Piecewise Constant Control Law}
\label{para:piecewise}

One simple but very effective extension to constant control inputs are piecewise constant control inputs. Instead of determining a single control input from the input set $\mathcal{U}$, we determine control inputs for all piecewise constant segments from the set $\mathcal{U} \times \dots \times \mathcal{U}$. We can still use the extended system in \eqref{eq:extSys}, but have to reset the initial set for reachability analysis to $\mathcal{R}(t_i) \times \mathcal{U}$ after each of the $i = 1,\dots,\pieces$ piecewise constant time segments $[t_{i-1},t_i]$ with $t_i = i \cdot t_f/\pieces$, where $\mathcal{R}(t_i)$ is the final reachable set from the previous segment.    

\paragraph{Polynomial Control Law}

Another possibility is to use control laws that are polynomial functions with respect to time. We consider the quadratic case for simplicity since the extension to general polynomials is straightforward. For a quadratic control law $u(t) = c_{(1)} + c_{(2)} t + c_{(3)} t^2$ parameterized by the vector of coefficients $c \in \R^3$, we can use the extended system
\begin{equation*}
	\begin{bmatrix} \dot x(t) \\ \dot c \\ \dot t \end{bmatrix} = \begin{bmatrix} f\big( x(t),c_{(1)} + c_{(2)} t + c_{(3)} t^2,w(t) \big) \\ \mathbf{0} \\ 1 \end{bmatrix}
\end{equation*}
together with the initial set $x_0 \times \mathcal{C} \times 0$. In the optimization problem \eqref{eq:optimize} we then determine the values for the parameter vector $c$, where we add the constraint $c_{(1)} + c_{(2)} t + c_{(3)} t^2 \in \mathcal{U}$ to ensure that the input constraints are satisfied. The initial set $\mathcal{C} \subset \R^3$ for the coefficient vector $c$ can be determined by estimating the feasible values for $c$ such that the constraint $c_{(1)} + c_{(2)} t + c_{(3)} t^2 \in \mathcal{U}$ is satisfied for the whole time horizon.

\paragraph{Feedback Control}
\label{para:feedback}

We can also apply a feedback control law $u(t) = u_{ref}(t) + K (x(t) - x_{ref}(t))$ with a fixed feedback matrix $K \in \R^{m \times n}$, where both piecewise constant or polynomial control inputs can be used for the reference control input $u_{ref}(t)$ corresponding to the reference trajectory $x_{ref}(t)$. For the safety shield, we then compute the reachable set for the extended system
\begin{equation*}
	\begin{bmatrix} \dot x(t) \\ \dot u_{ref}(t) \\ \dot x_{ref}(t) \end{bmatrix} = \begin{bmatrix} f\big( x(t), u_{ref}(t) + K(x(t) - x_{ref}(t)),w(t) \big) \\ \mathbf{0} \\ f\big(x_{ref}(t), u_{ref}(t), w(t)\big) \end{bmatrix}
\end{equation*}
using the initial set $x_0 \times \mathcal{U} \times x_0$. In the optimization problem \eqref{eq:optimize} we then determine the optimal parameter for the reference control inputs $u_{ref}(t)$, where we add the constraint $u_{ref} + K (x(t) - x_{ref}(t)) \in \mathcal{U}$ to satisfy the input constraints. 

\begin{figure}[!tb]
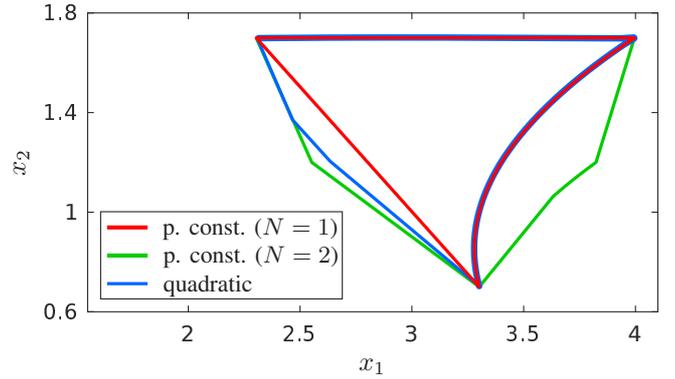

	\centering
	\setlength{\belowcaptionskip}{-15pt}
	\psfragfig[width=0.99\columnwidth]{./figures/sizeReachSet}{
		\psfrag{a}[c][c]{$x_1$}
		\psfrag{b}[c][c]{\rotatebox[origin=c]{180}{$x_2$}}
		\psfrag{c}[l][c]{\small p. const. ($\pieces =1$)}
		\psfrag{e}[l][c]{\small p. const. ($\pieces =2$)}
		\psfrag{u}[l][c]{\small quadratic}}
	\caption{Final reachable set for the system in Example~\ref{ex:runningExample} for a quadratic control law and piecewise constant control laws with different numbers of segments \captionmath{\pieces}.} 
	\label{fig:controlLaws}
\end{figure}

A comparison of the different control laws presented in this section is shown in Fig.~\ref{fig:controlLaws} for the system in Example~\ref{ex:runningExample}. The results demonstrate that even for a piecewise constant control law with only $\pieces=2$ segments we already obtain a larger reachable set than with a quadratic control law, which increases our chances to find a safe control input. While piecewise constant control laws therefore seem to be preferable, their rapidly changing values often negatively impact comfort or durability for many systems, which can be avoided with polynomial control laws.

For all control strategies we apply the following control scheme: We plan for a time horizon of $t_f$, but execute the resulting control law for only a shorter time period $t_c < t_f$ before planning a new trajectory. This increases the chances to avoid getting stuck in dead ends and ensures that we can react quickly to dynamic changes in the environment. 

\subsection{Spatial Dimensions of Mobile Robots}
\label{subsec:vehicleDim}

So far we considered the case where the safety constraints are specified directly by the system state. For collision avoidance, however, this setup is usually not sufficient since we additionally have to consider the shape and spatial dimension of mobile robots, e.g., cars, vessels, or drones, which we want to control safely. 
While for many other approaches this poses a huge problem, incorporating spatial dimensions of the robot into our safety shield is quite straightforward since we simply have to replace the reachable set with the occupancy set. Given the reachable set $\mathcal{R}(t)$ that typically describes all possible positions of a reference point on the robot as well as all possible robot orientations, the occupancy set is defined as
\begin{equation}
	\mathcal{O}(t) = \big\{ o(x,d) ~\big|~ x \in \mathcal{R}(t),~d \in \mathcal{D} \big\},
	\label{eq:occupancy}
\end{equation}
where the function $o: \Rn \times \R^\delta \to \R^\gamma$ describes how the space occupied by the robot is computed from the system state and the set $\mathcal{D}$ specifying the spatial dimension of the robot.
\begin{example}
	Let us consider a car where the states $x_{(1)}$ and $x_{(2)}$ describe the x- and y-position of its center, and state $x_{(3)}$ describes the orientation of the car. Then the function $o(x,d)$ that defines the space occupied by the car is given as
	\begin{equation}
		o(x,d) = \begin{bmatrix} x_{(1)} + \cos(x_{(3)}) \, d_{(1)} - \sin(x_{(3)}) \, d_{(2)} \\ x_{(2)} + \sin(x_{(3)}) \, d_{(1)} + \cos(x_{(3)}) \, d_{(2)} \end{bmatrix},
		\label{eq:outputCar}
	\end{equation}
	where the shape of the car is for simplicity enclosed by a rectangle, so that $d \in \mathcal{D} = [-l/2,l/2] \times [-w/2,w/2]$ with $l$ and $w$ denoting the length and width of the car.
\end{example}
To compute the occupancy set \eqref{eq:occupancy} from the reachable set $\mathcal{R}(t)$ and the set $\mathcal{D}$ using polynomial zonotopes, we suggest two approaches:
\begin{enumerate}
	\item We can compute a Taylor series expansion enclosure of the function $o(x,d)$ and evaluate it in a set-based way to obtain %a tight enclosure of 
	the occupancy set $\mathcal{O}(t)$ \cite[Sec.~4.4]{Kochdumper2020b}.
	\item Since polynomial zonotopes can be converted to Taylor models \cite[Prop.~4]{Kochdumper2019} we can apply Taylor model arithmetic \cite{Makino2003} to evaluate \eqref{eq:occupancy} and then convert the resulting set back to a polynomial zonotope.  
\end{enumerate}

The resulting safety constraints that we obtain from the intersections between the occupancy set $\mathcal{O}(t)$ and obstacles have to hold for all values $d \in \mathcal{D}$. To ensure this, we could represent the set $\mathcal{D}$ with independent generators before computing $\mathcal{O}(t)$, similarly as we did for the set of measurement errors in Sec.~\ref{sec:main}. However, since the set $\mathcal{D}$ is in general much larger than the set of measurement errors $\mathcal{V}$, this would often yield very conservative results. A better approach is to project out all factors that correspond to the set $\mathcal{D}$ using Fourier-Motzkin elimination \cite[Chapter~4.4]{Dantzig2016}. Let us demonstrate this by an example: 
\begin{example}
	We consider the constraint 
	\begin{equation}
		\forall \alpha_3 \in [-1,1]: ~~ \alpha_1 + \alpha_2 + \alpha_3 + \alpha_1^2 \alpha_3 \leq 1.5,
		\label{eq:fourierMotzkin1}
	\end{equation}
	from which we want to eliminate $\alpha_3$. The first step of Fourier-Motzkin elimination is to solve all constraints for $\alpha_3$, which yields
	\begin{equation}
		\alpha_3 \leq \frac{1.5 - \alpha_1 - \alpha_2}{1 + \alpha_1^2}, \quad \alpha_3 \leq 1, \quad \alpha_3 \geq \shortminus 1.
		\label{eq:fourierMotzkin}
	\end{equation}
	Next, we have to form all combinations of the constraints in \eqref{eq:fourierMotzkin} that result in a non-empty solution, yielding the constraints
	\begin{equation*}
		\begin{split}
			\frac{1.5 - \alpha_1 - \alpha_2}{1 + \alpha_1^2} \geq \shortminus 1 \quad &\Rightarrow \quad \shortminus \alpha_1^2 + \alpha_1 + \alpha_2 \leq 2.5 \\
			\frac{1.5 - \alpha_1 - \alpha_2}{1 + \alpha_1^2} \leq 1 \quad &\Rightarrow \quad \alpha_1^2 + \alpha_1 + \alpha_2 \geq 0.5,
		\end{split}
	\end{equation*}
	which represent an equivalent formulation of \eqref{eq:fourierMotzkin1}.
\end{example}
Since Fourier-Motzkin elimination requires that the constraints are solvable for the variable that is eliminated, all terms that violate this condition have to be removed first by applying a zonotope enclosure \cite[Prop.~5]{Kochdumper2019}.

\subsection{Mixed-Integer Linear Program Formulation}
\label{subsec:mixedInteger}

For some systems, solving the nonlinear mixed-integer optimization problem \eqref{eq:optimize} might be computationally too expensive, especially when we have to evaluate the safety shield in real-time for online application. Therefore, we now discuss how to obtain a feasible and close to optimal solution using mixed-integer linear programming, which is significantly faster. To achieve this, we enclose the polynomial zonotopes that represent the reachable set with zonotopes using \cite[Prop.~5]{Kochdumper2019}. Since zonotopes are linear in the factors $\alpha$, the feasible region for $\alpha$ calculated using Thm.~\ref{prop:intersect} is then given as a union of polytopes $\bigcup_l \langle A_l, b_l\rangle_P$ instead of a union of polynomial level sets. Consequently, if we additionally minimize the $L^1$-norm instead of the $L^2$-norm, we can simplify the optimization problem \eqref{eq:optimize} to
\begin{equation*}
	\min_{\alpha \in [\shortminus \mathbf{1},\mathbf{1}]} \| \alpha - \alpha_{\subRL} \|_1  ~~ \text{s.t.} ~~ \alpha \in \bigcap_{\ind{}=1}^{\numInt} \bigcup_{l=1}^{\polyCons_\ind{}} \langle A_{\ind{}l}, b_{\ind{}l} \rangle_P,
\end{equation*}	
which can be formulated as a mixed-integer linear program using Balas' Theorem \cite{Balas1998}:
\begin{equation}
	\min_{\alpha \in [\shortminus \mathbf{1},\mathbf{1}]} \| \alpha - \alpha_{\subRL} \|_1
	\label{eq:linProg}
\end{equation}
subject to
\allowdisplaybreaks
\begin{align*}
	& A_{\ind{}l} \, \widehat{\alpha}_{\ind{}l} \leq \lambda_{\ind{}l} \, b_{\ind{}l}, ~ \shortminus \mathbf{1} \, \lambda_{\ind{}l} \leq \widehat{\alpha}_{\ind{}l} \leq \mathbf{1} \, \lambda_{\ind{}l}, \\
	& \, \lambda_{\ind{}l} \in \{0,1\}, ~ \alpha = \sum_{l=1}^{\polyCons_\ind{}} \widehat{\alpha}_{\ind{}l}, ~\sum_{l=1}^{\polyCons_\ind{}} \lambda_{\ind{}l} = 1,
\end{align*}
for $\ind{} = 1,\dots,\numInt$ and $l = 1,\dots,\polyCons_\ind{}$. The structure of this optimization problem is very similar to \eqref{eq:optimize}, except that we introduced the new variables $\widehat{\alpha}_{\ind{}l} = \alpha_{\ind{}l} \lambda_{\ind{}l}$ to avoid the bilinear terms and obtain a linear program. Due to the over-approximation of all nonlinear terms of the polynomial zonotope by the zonotope enclosure, it holds that every feasible solution for \eqref{eq:linProg} is a feasible solution to the original problem \eqref{eq:optimize}, but some values that are feasible for \eqref{eq:optimize} will not be feasible for \eqref{eq:linProg}. 
Note that if the system dynamics \eqref{eq:system} is linear, we directly obtain a mixed-integer linear program in the form of \eqref{eq:linProg}. Moreover, we can always first check if the desired value $\alpha_{\subRL}$ satisfies the original nonlinear constraints and only perform the simplification to a mixed-integer linear program if it does not. A mixed-integer quadratic program can be obtained in a similar way as the mixed-integer linear program by enclosing all generators that belong to higher-order polynomials by a zonotope. Finally, since mixed-integer programming can be highly parallelized, the computation time for optimization can always be reduced by using a more powerful machine with more cores. 

\subsection{Constraint Grouping}
\label{subsec:constraintGrouping}

Since the time step size for reachability analysis is usually relatively small, it often happens that many reachable sets for consecutive time intervals intersect the same obstacle, resulting in a lot of very similar constraints. 
We can reduce the computation time by grouping similar constraints together, as we demonstrate with the following example:
\begin{example}
	The two constraints 
	\begin{equation*}
		\begin{split}
			& 1.1 \, \alpha_1 + 0.7 \, \alpha_1 \alpha_2 \leq 0.3 \\
			& 1.3 \, \alpha_1 + 0.5 \, \alpha_1 \alpha_2 \leq 0.3
		\end{split}
	\end{equation*}
	on $\alpha_1,\alpha_2 \in [\shortminus 1,1]$ can be grouped to the single constraint
	\begin{equation*}
		\forall \epsilon_1 \in [1.1, 1.3], \, \forall \epsilon_2 \in [0.5, 0.7]: ~ \epsilon_1 \, \alpha_1 + \epsilon_2 \, \alpha_1 \alpha_2 \leq 0.3.
	\end{equation*}
	To eliminate the new variables $\epsilon_1$ and $\epsilon_2$ we represent their domains as a summation of the center with a zero-centered uncertainty as $\epsilon_1 \in 1.2 + \widetilde{\epsilon}_1$, $\epsilon_2 \in 0.6 +  \widetilde{\epsilon}_2$ with $\widetilde{\epsilon}_1,\widetilde{\epsilon}_2 \in [\shortminus 0.1,0.1]$, which finally yields
	\begin{equation*}
		1.2 \, \alpha_1 + 0.6 \, \alpha_1 \alpha_2 \leq \hspace{-10pt} \min_{\substack{ \\ \alpha_1,\alpha_2 \in [\shortminus 1,1] \\ \widetilde{\epsilon}_1,\widetilde{\epsilon}_2 \in [\shortminus 0.1,0.1] }} \hspace{-10pt} 0.3 - \epsilon_1 \, \alpha_1 - \epsilon_2 \, \alpha_1 \alpha_2,
	\end{equation*}
	where a lower bound for the optimal value of the minimization problem can be computed using interval arithmetic \cite{Jaulin2006}.
\end{example}	

In addition to the number of constraints, constraints grouping also decreases the number of integer variables for the optimization in \eqref{eq:optimize}, which reduces computation time. Since integer variables are required only if the safe region for the agent is non-convex, another strategy to accelerate the optimization is to replace non-convex safe regions by the largest convex subset \cite{Deits2015}.

\subsection{Reachable Set Pre-Computation}
\label{subsec:reachSetPrecomputation}

In order to reduce the computation time for our safety shield, we can pre-compute the reachable set starting from an initial set $\mathcal{X}_0$ offline, and then apply the reachable subset approach \cite{Kochdumper2020c} to efficiently extract the reachable set for the current state $x_0 \in \mathcal{X}_0$ during online execution. Since for nonlinear systems the accuracy of the reachable set enclosure depends on the size of the initial set, we cannot make $\mathcal{X}_0$ too large but instead have to divide the relevant state space into sets of suitable size. The number of required sets for such a division grows exponentially with the system dimension, so that this approach is not suited for high-dimensional systems. However, for many systems the differential equation $\dot x(t) = f(x(t),u(t),w(t))$ describing the system dynamics is invariant with respect to transformations of certain states \cite[Sec.~4.1]{Bak2015b}. For example, the dynamics of a car are invariant with respect to translations of the car's position and with respect to rotations of the car's orientation. In this case only the state space for the states that are not invariant has to be divided since we can always apply a suitable state space transformation to set the invariant states to 0.

\section{Experimental Evaluation}
\label{sec:exp}

We now demonstrate the performance of our safety shield on several benchmark systems, where each benchmark highlights different properties of our approach.
If not explicitly stated otherwise, all computations are carried out in Python on a 2.9GHz quad-core i7 processor with 32GB memory. We use the CORA toolbox \cite{Althoff2015a} to pre-compute reachable sets, proximal policy optimization \cite{Schulman2017} for reinforcement learning, Gurobi to solve the mixed-integer linear and quadratic programs, and CasADi together with the BONMIN solver to solve mixed-integer nonlinear programs\footnote{\url{https://www.gurobi.com/} and \url{https://web.casadi.org/}}. Benchmark parameters as well as the applied extensions from Sec.~\ref{sec:impr} are listed in Tab.~\ref{tab:benchmarks}. We published our implementation on CodeOcean\footnote{\url{https://codeocean.com/capsule/9949621/tree/v1}} and created a video showing our results\footnote{\url{https://youtu.be/6ISKxO4DDWA}}.

\begin{table}[tb]
	\caption{States \captionmath{n}, control inputs \captionmath{m}, planning horizon \captionmath{t_f}, number of pre-computed reachable sets, and extensions applied for each benchmark.}
	\vspace{-5pt}
	\label{tab:benchmarks}
	\begin{center}
		\renewcommand{\arraystretch}{1.2}
		\begin{tabular}{ l c c c c c}
			\toprule[0.5pt]
			\hspace{-4pt}\textbf{Benchmark}  & $\mathbf{n}$ & \hspace{-5pt}$\mathbf{m}$\hspace{-5pt} & $\mathbf{t_f}$ & \hspace{-5pt}\textbf{Sets}\hspace{-5pt} & \textbf{Extensions} \\
			\midrule[0.25pt]
			\hspace{-4pt}F1tenth car    &  5 & 2 & 2\,\si{\second} & 5 & \ref{para:piecewise}, \ref{subsec:vehicleDim}, \ref{subsec:mixedInteger}, \ref{subsec:reachSetPrecomputation}                \\
			\hspace{-4pt}Auto. driving & 4 & 2 & 0.8\,\si{\second} & 60 & \hspace{-3pt}\ref{para:feedback},\,\ref{subsec:vehicleDim},\,\ref{subsec:mixedInteger},\,\ref{subsec:constraintGrouping},\,\ref{subsec:reachSetPrecomputation}\hspace{-3pt}         \\
			\hspace{-4pt}Quadrotor 2D & 6 & 2 & 0.5\,\si{\second} & \hspace{-5pt}256\hspace{-5pt} & \ref{subsec:reachSetPrecomputation} \\
			\hspace{-4pt}Quadrotor 3D & \hspace{-5pt}10\hspace{-5pt} & 3 & 3\,\si{\second} & 1 & \ref{subsec:vehicleDim}, \ref{subsec:mixedInteger}, \ref{subsec:reachSetPrecomputation} \\
			\bottomrule[0.5pt]
		\end{tabular}
	\end{center}
	\vspace{-15pt}
\end{table}

\subsection{F1tenth Racecar}

To demonstrate that our safety shield is fast and robust enough to be applied to a real system, we conduct experiments on an F1tenth racecar \cite{Kelly2020}, whose dynamics are described by a kinematic single-track model. Moreover, the car contains a low-level PI controller with gains $k_P = 8$ and $k_I = 1$ that takes as input the desired velocity and realizes the required acceleration. Overall, this results in the model
\begin{equation}
	\begin{split}
		& \dot s_x = \cos(\psi) \, v, ~ \dot s_y = \sin(\psi) \, v, ~ \dot \psi = u_2 + w_2, \\
		& \dot v = k_P (u_1 - v) + k_I \, e_I + w_1, ~ \dot e_I = u_1 - v,
	\end{split}	
	\label{eq:f1tenth}
\end{equation}
where the system state consists of the x- and y-position of the center $s_x,s_y$, the velocity $v$, the orientation $\psi$, and the integrated error of the PI controller $e_I$. The control inputs are the desired velocity $u_1$ and the steering angle $u_2$, which are bounded by the set $\mathcal{U} = [0, 0.5] \si{\metre \per \second} \times [-0.3,0.3] \si{\radian}$. To ensure that the model \eqref{eq:f1tenth} encloses all possible behaviors of the real system, we performed conformance checking using the AROC toolbox \cite{Kochdumper2021b} to determine the process noise as well as the measurement error from data traces we recorded from the real car, which results in the sets 
\begin{align*}
	& \mathcal{W} = [-0.007, 0.0035] \si{\metre \per \square \second} \times [-0.0104, 0.0132] \si{\radian \per \second} \\
	& \mathcal{V} = [-0.0584, 0.0446] \si{\metre \per \second} \times [-0.0438, 0.0466] \si{\metre \per \second} \times \\
	& \quad \quad [-0.0933, 0.0561] \si{\metre \per \square \second} \times [-0.0446, 0.0593] \si{\radian \per \second} \times \\
	& \quad \quad [-0.0005, 0] \si{\metre \per \square \second}.
\end{align*} 
To incorporate the size of the car, we use the output function in \eqref{eq:outputCar} with length $0.51 \, \si{\metre}$ and width $0.31 \,\si{\metre}$. 

For control we use a piecewise constant control law with $\pieces = 2$ segments and a planning horizon of $t_f = 2 \, \si{\second}$, and we replan as soon as the previous computation is finished. Moreover, we simplify the optimization problem for action projection to a mixed-integer quadratic program, which on average took $0.14 \, \si{\second}$ to solve during our experiments. 
The car uses a 1.9GHz six-core ARMv8 processor with 7.6GB memory and is equipped with a LiDAR sensor. To obtain the unsafe sets $\mathcal{F}_i$, we enclose all points measured by the LiDAR by a union of polytopes.
Moreover, while the velocity and the integrated error can be directly obtained from the car's internal sensors, we use a particle filter \cite{Thrun2001} to determine the position and orientation of the car in the environment from LiDAR measurements. For our experiments, we then applied reinforcement learning to train an agent on four environment maps that were similar to but slightly different from the map we used for the experiments on the real F1tenth car. In addition to the system state, we used the LiDAR measurements and the position of the goal set as observations for the agent, and we did not use the safety shield during training. 

As shown in Fig.~\ref{fig:f1tenth}, without the safety shield, the trained agent is unsafe since the car crashes into the obstacle. With our safety shield, however, the car avoids the obstacle and safely reaches the goal set. This not only demonstrates that our safety shield successfully works on a real system, but also that the modifications to the control inputs suggested by the reinforcement learning policy are small enough for the agent to still fulfill its objective.

\begin{figure}[!tb]
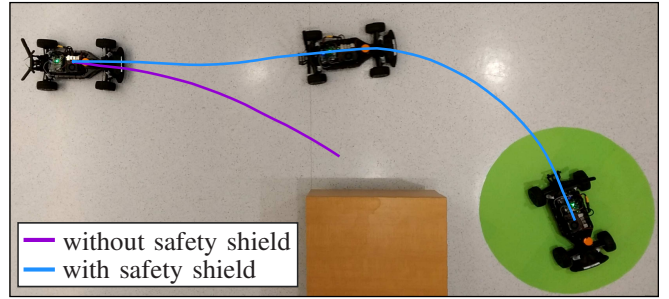

	\centering
	\setlength{\belowcaptionskip}{-15pt}
	\psfragfig[width=0.99\columnwidth]{./figures/F1tenthShort}{
		\psfrag{a}[l][c]{without safety shield}
		\psfrag{c}[l][c]{with safety shield}}
	\caption{Trajectories driven by the F1tenth racecar with and without the safety shield, where the green area is the goal set and the orange area is the obstacle.
	} 
	\label{fig:f1tenth}
\end{figure}

\begin{figure*}[!tb]
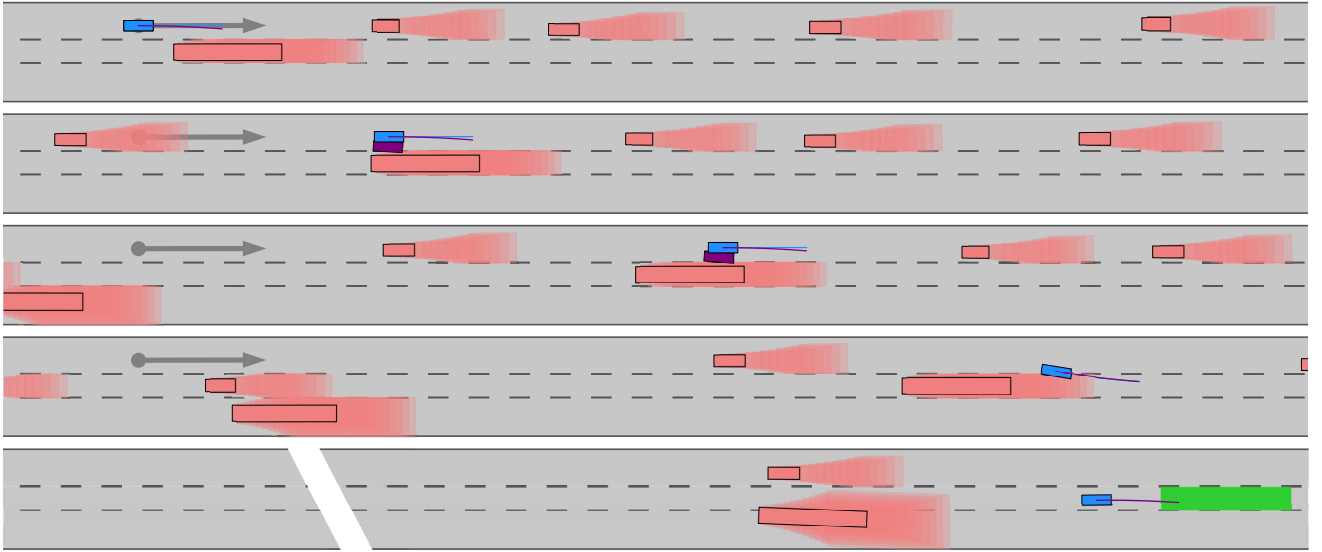

	\centering
	\setlength{\belowcaptionskip}{-13pt}
	\psfragfig[width=1.99\columnwidth]{./figures/commonRoad}{
		\psfrag{a}[c][c]{without safety shield}
		\psfrag{c}[c][c]{with safety shield}}
	\caption{Results for the CommonRoad scenario \textit{DEU\_LocationALower-33\_16\_T-1} visualized at times \captionmath{0\,\si{\second}}, \captionmath{1.2\,\si{\second}}, \captionmath{2.8\,\si{\second}}, \captionmath{4.4\,\si{\second}}, and \captionmath{9.2\,\si{\second}} (from top to bottom), where the agent without safety shield is depicted in purple, the agent with safety shield is depicted in blue, the dynamic obstacles are depicted in red, and the goal set is depicted in green.} 
	\label{fig:commonRoad}
\end{figure*}

\subsection{Autonomous Driving}

In order to show that our safety shield can handle very complex reach-avoid problems that include dynamic obstacles, we consider the motion planning benchmarks for autonomous cars provided by the CommonRoad database \cite{Althoff2017a}. As system dynamics we use the kinematic single track model from \cite[Sec.~VII]{Schurmann2021} with the same input set $\mathcal{U}$ and set of process noise $\mathcal{W}$ as in \cite[Sec.~VII]{Schurmann2021}. This model is very similar to the model in \eqref{eq:f1tenth}, with the only difference that the acceleration instead of the desired velocity is used as a control input. The car we consider is a BMW 320i, which has a length of $4.51 \,\si{\metre}$ and a width of $1.61\,\si{\metre}$. To guarantee safety even though the intentions of the other cars are unclear, we use the tool SPOT \cite{Koschi2017a} to compute all possible occupancies of the other traffic participants that apply to traffic rules using set-based prediction.  

\begin{table}[!tb]
	\caption{Results for the evaluation of our safety shield on 2000 CommonRoad traffic scenarios.}
	\vspace{-7pt}
	\label{tab:commonRoad}
	\begin{center}
		\renewcommand{\arraystretch}{1.2}
		\begin{tabular}{ l c c c}
			\toprule[0.5pt]
			\textbf{Agent}  & \textbf{Collisions} & \textbf{Goal Reached} & \textbf{Time} \\
			\midrule[0.25pt]
			without safety shield    & 10 (0.5\%)  & 1907 (95.35\%) & - \\
			with safety shield      & 0 (0\%)  & 1925 (96.25\%) & $0.043\,\si{\second}$ \\
			constraint grouping & 0 (0\%)  & 1924 (96.20\%) & $0.035\,\si{\second}$ \\
			\bottomrule[0.5pt]
		\end{tabular}
	\end{center}
	\vspace{-15pt}
\end{table}

To counteract the large process noise for this benchmark, we use a feedback controller $u(t) = u_{ref} + K(x(t)-x_{ref}(t))$ for the safety shield, where the reference input $u_{ref}$ is piecewise constant with $\pieces=2$ segments. The feedback matrix $K \in \R^{m \times n}$ is determined by applying an LQR control approach with state weighting matrix $Q = I_4$ and input weighting matrix $R = I_2$ to the linearized system. Moreover, we use a planning horizon of $t_f = 0.8\,\si{\second}$ and replan after $t_c = 0.4\,\si{\second}$. We apply reinforcement learning to train an agent that aims to safely control the car, where we do not use the safety shield during training. The observations for the agent are selected from \cite[Tab.~II]{Wang2021}. In particular, we use the state of the ego vehicle, the distances of the ego vehicle to road/lane boundaries as well as to the goal set, and the states of surrounding vehicles. 

The effect of the safety shield is highlighted by the results for 2000 traffic scenarios shown in Tab.~\ref{tab:commonRoad}: While the original agent collides with other traffic participants in 10 scenarios, our safety shield successfully prevents all collisions. Moreover, applying the safety shield does not lead to a reduced goal-reaching percentage, but instead even increases the number of scenarios for which the goal set is reached. 
Tab.~\ref{tab:commonRoad} also demonstrates the effect of constraint grouping (see Sec.~\ref{subsec:constraintGrouping}), which reduces the average computation time for solving the optimization problem, but slightly decreases the goal reaching percentage due to the increased conservatism.
In Fig.~\ref{fig:commonRoad} the results for one specific traffic scenario are visualized. There, the agent without the safety shield changes the lane too early and collides with the adjacent truck, whereas the agent with the safety shield changes the lane just in time and finally reaches the goal set in the end.

\begin{figure}[!tb]
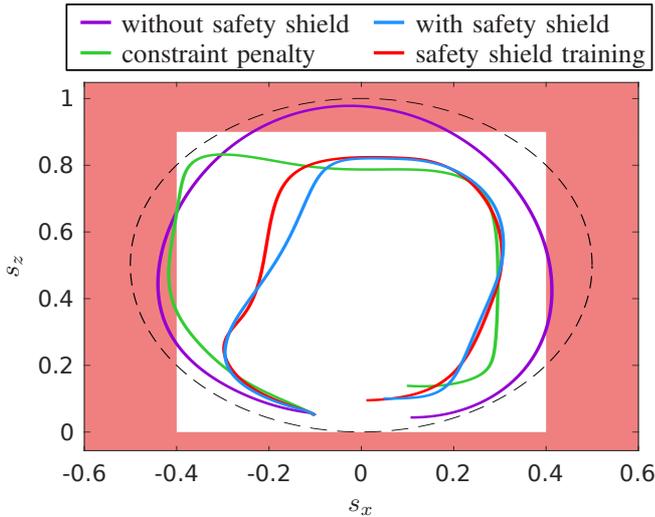

\centering
\setlength{\belowcaptionskip}{-13pt}
\psfragfig[width=0.99\columnwidth]{./figures/quadrocopter2Dshort}{
	\psfrag{a}[c][c]{$s_x$}
	\psfrag{b}[c][c]{\rotatebox[origin=c]{180}{$s_z$}}
	\psfrag{c}[l][c]{without safety shield}
	\psfrag{u}[l][c]{with safety shield}
	\psfrag{x}[l][c]{constraint penalty}
	\psfrag{e}[l][c]{safety shield training}  
}
\caption{Trajectories for the 2D quadrotor benchmark featuring the baseline agent with and without safety shield, the constraint-penalty agent, and the agent that is trained with the safety shield. The trajectory that should be tracked is visualized by the dashed black line and the unsafe regions are depicted in red.} 
\label{fig:quadcopter2D}
\end{figure}

\subsection{Quadrotor 2D}

Next, we compare our safety shield with a safe reinforcement learning approach that modifies the optimization criterion. In particular, we incorporate the safety specification as a violation penalty in the reward function. For this, we consider a benchmark problem from the safe-control-gym \cite{Yuan2022} featuring a trajectory tracking task for a two-dimensional quadrotor. As shown in Fig.~\ref{fig:quadcopter2D}, the trajectory that should be tracked is partially located inside an unsafe region, so that there exists a conflict between tracking performance and safety constraint satisfaction. The dynamics of the quadrotor are according to \cite[Eq.~(3)]{Yuan2022} given as
\begin{equation*}
\begin{split}
	& \ddot s_x = \sin(\psi) \, (u_1 + u_2) / m + w_1 \\
	& \ddot s_z = \cos(\psi) \, (u_1 + u_2) / m - g + w_2 \\
	& \ddot \psi = (u_2 - u_1) \, a / \big(\sqrt{2} \, I_{yy}\big) + w_3,
\end{split}
\end{equation*}
where $m = 0.027\,\si{\kilogram}$ is the mass, $g = 9.81\,\si{\meter \per \square \second}$ is the gravitational acceleration, $a = 0.0397\,\si{\meter}$ is distance from each motor pair to the center of mass of the quadrotor, and $I_{yy} = 1.4\cdot 10^{-5} \,\si{\kilogram \per \square \meter}$ is the moment of inertia. The system state consists of the x- and z-positions $s_x, s_z$ as well as the pitch angle $\psi$ of the quadrotor together with the corresponding velocities. To decouple forward thrust and tilting torque, the input set for the control inputs $u_1$ and $u_2$ that represent the thrusts generated by the two rotors is restricted to
\begin{equation*}
\mathcal{U} = \bigg\langle \begin{bmatrix} 0.1323 \, \si{\newton} \\ 0.1323 \, \si{\newton} \end{bmatrix}, \begin{bmatrix} 0.0125 \, \si{\newton} & 0.0015 \, \si{\newton} \\ 0.0125 \, \si{\newton} & -0.0015 \, \si{\newton} \end{bmatrix} \bigg \rangle_Z.
\end{equation*}
The process noise $w_1,w_2,w_3$ is bounded by the set $\mathcal{W} = 0.01 \cdot [\shortminus \mathbf{1},\mathbf{1}]$.

For the safety shield we use a constant control input with a planning horizon of $t_f = 0.5\,\si{\second}$, where we replan after $t_c = 0.02\,\si{\second}$. To perform action projection, we solve the original nonlinear optimization problem, which takes $0.004\,\si{\second}$ on average during our experiments. The main reason for the fast computation time is that the safe region for the quadrotor is convex, which results in an optimization problem without any integer variables. We train three different agents: A baseline agent that should track the trajectory and gets no information about the constraints, a constraint-penalty agent where the reward is extended with a penalty for constraint violation, and a safe agent that is trained with the safety shield. As shown in Fig.~\ref{fig:training}, the safe and baseline agents converge after 400 000 training steps while the agent with constraint penalty needs 2 million training steps to converge. Moreover, only the safe agent never violates any constraints during training, and could therefore also be used for training directly on the real physical system. 

The results for deploying the different trained agents are shown in Fig.~\ref{fig:quadcopter2D}. As expected, the baseline agent without the safety shield violates the safety constraints since they were not considered during training. Also, the constraint-penalty agent violates the constraints, which demonstrates that it is not sufficient to incorporate the safety constraints into the training process. Only the two agents that apply our safety shield stay inside the safe region for all times, where the agent that uses the safety shield during training achieves a smoother trajectory compared to the baseline agent. 

\begin{figure}[!tb]
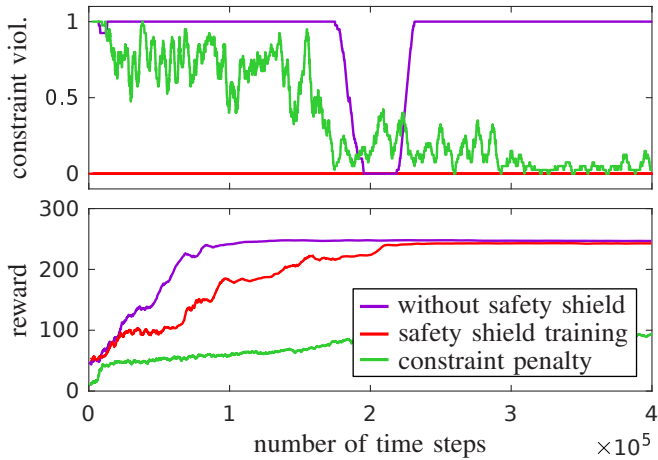

\centering
\setlength{\belowcaptionskip}{-13pt}
\psfragfig[width=0.99\columnwidth]{./figures/training}{
	\psfrag{a}[c][c]{number of time steps}
	\psfrag{b}[c][c]{\rotatebox[origin=c]{180}{constraint viol.}}
	\psfrag{d}[c][c]{\rotatebox[origin=c]{180}{reward}}
	\psfrag{c}[l][c]{without safety shield}
	\psfrag{e}[l][c]{safety shield training}
	\psfrag{u}[l][c]{constraint penalty}
}
\caption{Episode rewards and constraint violations for the 2D quadrotor benchmark observed during training without safety shield, with safety shield, and with constraint penalty.} 
\label{fig:training}
\end{figure}

\subsection{Quadrotor 3D}

To compare our safety shield with reachability-based trajectory safeguard \cite{Shao2021}, we consider the three-dimensional quadrotor benchmark from \cite[Sec.~V.B]{Shao2021}. Reachability-based trajectory safeguard \cite{Shao2021} applies the safety shield to a simplified trajectory-generating model and the resulting trajectory is then tracked by a low-level controller that uses the original nonlinear system model. For the quadrotor, the trajectory-generating model for each of the three spatial directions $i \in \{1,2,3\}$ is
\begin{equation*}
\begin{split}
	& \dot x_i = v_{i} \hspace{-1pt}+\hspace{-1pt} a_{i}\,t \hspace{-1pt}-\hspace{-1pt} \big( 2 a_{i} \hspace{-1pt}+\hspace{-1pt} 3 (v_{i} \hspace{-1pt}-\hspace{-1pt} u_i) \big) t^2 \hspace{-1pt}+\hspace{-1pt} \big( a_{i} \hspace{-1pt}+\hspace{-1pt} 2 (v_{i} \hspace{-1pt}-\hspace{-1pt} u_i) \big) t^3 \\
	& \quad \quad \quad \quad \quad \dot v_{i} = 0, \quad \quad \dot a_{i} = 0, \quad \quad \dot t = 1,
\end{split}
\end{equation*}
where $x_i$ is the quadrotor position, $v_{i}$ and $a_{i}$ are respectively the velocity and the acceleration at the beginning of the trajectory, and $t$ is time. The input $u_i$ to the system is the peak velocity 
reached at time $t = 1.5\,\si{\second}$, which is bounded by the set $\mathcal{U} = \{u = [u_1~u_2~u_3]^T~|~ \| u \|_2 \leq 5 \,\si{\metre \per \second}\}$. To apply our safety shield, we tightly inner-approximate the set $\mathcal{U}$ with a zonotope using the method described in \cite[Sec.~IV]{Gassmann2020}. 
A similar trajectory-generating model is used to decelerate the quadrotor from the peak velocity back to velocity 0, so that the overall planning horizon is $t_f = 3\,\si{\second}$. 
We consider the same control task as in \cite[Sec.~V.B]{Shao2021}, which is to safely navigate the quadrotor through a $100\,\si{\metre}$ long tunnel with randomly generated box obstacles. For our experiments, we deployed the same trained reinforcement learning agent as used in \cite{Shao2021} on 100 tunnels with different obstacles and compared the conservatism of the two safety shields in terms of the required control input correction $\| u - u_{\subRL} \|_2$ at each intervention of the safety shield. 
While both safety shields had to intervene for 5078 out of 5760 time steps, the average control input modification for our approach is with $1.13\,\si{\metre \per \second}$ smaller than the average modification $1.22\,\si{\metre \per \second}$ for the safety shield from \cite{Shao2021}, which increases the chances that the agent can successfully complete its task. 
´
\section{Discussion}
\label{sec:disc}

Finally, let us discuss some properties of our safety shield.

\subsection{Safety Guarantees for Infinite Time}

Our basic safety shield approach can guarantee safety only for the finite time horizon $t_f$. To obtain safety guarantees for an infinite time horizon, one can either combine our safety shield with a fail-safe planner \cite{Pek2018b} that takes over when the safety shield cannot determine a safe trajectory anymore, or one can modify the safety shield in such a way that the system always stops in a safe final state at the end of the planning horizon \cite{Shao2021}.

\subsection{Computational Complexity}

The two main steps required for our safety shield are computing the reachable set and solving the mixed-integer optimization problem \eqref{eq:optimize} for action projection. The complexity of the conservative polynomialization algorithm for reachability analysis is $\mathcal{O}(n^5)$ with respect to the system dimension according to \cite[Sec.~4.1.4]{Kochdumper2022}. However, for many benchmarks one can apply the pre-computation discussed in Sec.~\ref{subsec:reachSetPrecomputation} to avoid computing reachable sets online. Solving a mixed-integer optimization problem is in general NP-hard \cite{Papadimitriou1981}. But, as we demonstrated with the numerical experiments in Sec.~\ref{sec:exp}, by applying the simplification to a mixed-integer linear program in Sec.~\ref{subsec:mixedInteger} and/or constraint grouping in Sec.~\ref{subsec:constraintGrouping} we can solve this optimization efficiently.

\subsection{Safe Computation Time Consideration}

As demonstrated by the experiments in Sec.~\ref{sec:exp}, even with all the speed-ups discussed in Sec.~\ref{sec:impr}, the calculations required for our safety shield still need a certain amount of computation time that, depending on the system, might be too long to simply be neglected. Therefore, in order to consider the required computation time in a formally correct manner, we can apply the following well-known procedure \cite{Schuermann2018a}: We allocate a certain computation time $t_{comp}$ for the calculations and use reachability analysis to predict the reachable states for the allocated computation time. By using this set as the initial set for our safety shield, we can guarantee safety even though the required calculations are not instantaneous. If the computation does not finish in the allocated computation time, we either stick to the safe solution from the previous time step or apply a failsafe maneuver. 

\subsection{Conservatism of the Safety Shield}

Due to over-approximation errors, our safety shield might not be able to always find a feasible solution if one exists. In particular, there are four sources of conservatism:
\begin{itemize}
\item Since the exact reachable set cannot be computed for general nonlinear systems, we compute a tight enclosure instead (e.g., we aim to minimize the Hausdorff distance between the enclosure and the exact set).
\item Due to dependency preservation, the abstraction error for reachability analysis is computed on the reachable set for the whole input set rather than the smaller reachable set for a specific control input, which results in additional conservatism.
\item For bloating the obstacles by the set of uncertainties defined by the independent generators, we use an over-approximative Minkowski sum in Thm.~\ref{prop:intersect} that simply pushes the obstacle halfspaces outward. 
\item Since we choose a certain type of control law in advance, we restrict the space of possible control inputs. 
\end{itemize}
However, all of these over-approximation errors can be made arbitrarily small: The over-approximation for reachability converges to zero if the time step size is reduced and/or the reachable set is split, which also eliminates the error introduced by dependency preservation. Moreover, the approximative Minkowski sum in Thm.~\ref{prop:intersect} can be replaced by the exact one and every control law can be approximated arbitrary close by a piecewise constant control law with an infinite number of piecewise constant segments. 

\subsection{Parameter Tuning}

Since the settings for reachability analysis can be tuned automatically \cite{Wetzlinger2021,Wetzlinger2022}, the main design parameters for our safety shield in addition to the type of control law discussed in Sec.~\ref{subsec:controlLaw} are the planning horizon $t_f$ and the replanning time $t_c$. A longer planning horizon $t_f$ often yields better control performance due to the larger lookahead, but also increases the computation time. Especially in the presence of dynamic obstacles, a small replanning time $t_c$ is desirable in order to be able to quickly react to a changing environment. However, a small $t_c$ requires the approach to be faster in order to run in real-time. Finally, the extensions discussed in Sec. \ref{subsec:mixedInteger}, \ref{subsec:constraintGrouping}, and \ref{subsec:reachSetPrecomputation} all reduce the computation time at the cost of introducing more conservatism.

\vspace{0.3cm}
\section{Conclusion}
\label{sec:conc}

We presented a novel safety shield for nonlinear continuous systems with input constraints that can be added to reinforcement learning agents in order to prevent them from applying unsafe actions. Since our safety shield uses set-based computations in the form of reachability analysis to determine which actions are safe and which are unsafe, it can guarantee robust safety despite process noise and measurement errors. Moreover, because our approach applies highly parallelized mixed-integer programming to project the action from the agent to the closest safe action, it is possible to reduce the computation time by using a more powerful machine with more cores.  
Finally, we demonstrated with several numerical examples as well as experiments on a real system that our safety shield modifies the actions
proposed by the reinforcement learning agent as little as necessary for robust safety.

\section*{Acknowledgment}
The first three authors contributed equally. We gratefully acknowledge the financial support from the project justITSELF funded by the European Research Council (ERC) under grant agreement No 817629 and the German Research Foundation through the research training group ConVeY under grant GRK 2428. In addition, this material is based upon work supported by the Air Force Office of Scientific Research and the Office of Naval Research under award numbers FA9550-19-1-0288, FA9550-21-1-0121, FA9550-23-1-0066 and N00014-22-1-2156. Any opinions, findings, and conclusions or recommendations expressed in this material are those of the authors and do not necessarily reflect the views of the United States Air Force or the United States Navy. 

\bibliographystyle{IEEEtran}
\bibliography{kochdumper,cpsGroup}

% Generated by IEEEtran.bst, version: 1.14 (2015/08/26)
\begin{thebibliography}{10}
\providecommand{\url}[1]{#1}
\csname url@samestyle\endcsname
\providecommand{\newblock}{\relax}
\providecommand{\bibinfo}[2]{#2}
\providecommand{\BIBentrySTDinterwordspacing}{\spaceskip=0pt\relax}
\providecommand{\BIBentryALTinterwordstretchfactor}{4}
\providecommand{\BIBentryALTinterwordspacing}{\spaceskip=\fontdimen2\font plus
\BIBentryALTinterwordstretchfactor\fontdimen3\font minus
  \fontdimen4\font\relax}
\providecommand{\BIBforeignlanguage}[2]{{%
\expandafter\ifx\csname l@#1\endcsname\relax
\typeout{** WARNING: IEEEtran.bst: No hyphenation pattern has been}%
\typeout{** loaded for the language `#1'. Using the pattern for}%
\typeout{** the default language instead.}%
\else
\language=\csname l@#1\endcsname
\fi
#2}}
\providecommand{\BIBdecl}{\relax}
\BIBdecl

\bibitem{Zhao2020}
W.~Zhao, J.~P. Queralta, and T.~Westerlund, ``Sim-to-real transfer in deep
  reinforcement learning for robotics: {A} survey,'' in \emph{Proc. of the
  Symposium Series on Computational Intelligence}, 2020, pp. 737--744.

\bibitem{Kiran2021}
B.~R. Kiran, I.~Sobh, V.~Talpaert, P.~Mannion, A.~A. Al~Sallab, S.~Yogamani,
  and P.~P{\'e}rez, ``Deep reinforcement learning for autonomous driving: {A}
  survey,'' \emph{Transactions on Intelligent Transportation Systems}, vol.~23,
  no.~6, pp. 4909--4926, 2021.

\bibitem{Zhang2019}
Z.~Zhang, D.~Zhang, and R.~C. Qiu, ``Deep reinforcement learning for power
  system applications: {A}n overview,'' \emph{CSEE Journal of Power and Energy
  Systems}, vol.~6, no.~1, pp. 213--225, 2019.

\bibitem{Achiam2017}
J.~Achiam, D.~Held, A.~Tamar, and P.~Abbeel, ``Constrained policy
  optimization,'' in \emph{Proc. of the Int. Conference on Machine Learning},
  2017, pp. 22--31.

\bibitem{Yang2019}
T.-Y. Yang, J.~Rosca, K.~Narasimhan, and P.~J. Ramadge, ``Projection-based
  constrained policy optimization,'' in \emph{Proc. of the Int. Conference on
  Learning Representations}, 2019.

\bibitem{Hasanbeig2020}
M.~Hasanbeig, A.~Abate, and D.~Kroening, ``Cautious reinforcement learning with
  logical constraints,'' in \emph{Proc. of the Int. Conference on Autonomous
  Agents and Multiagent Systems}, 2020, pp. 483--491.

\bibitem{Wang2022}
X.~Wang, C.~Pillmayer, and M.~Althoff, ``Learning to obey traffic rules using
  constrained policy optimization,'' in \emph{Proc. of the Int. Conference on
  Intelligent Transportation Systems}, 2022, pp. 2415--2421.

\bibitem{Konighofer2021}
B.~K{\"o}nighofer, J.~Rudolf, A.~Palmisano, M.~Tappler, and R.~Bloem, ``Online
  shielding for stochastic systems,'' in \emph{Proc. of the NASA Formal Methods
  Symposium}, 2021, pp. 231--248.

\bibitem{Geibel2005}
P.~Geibel and F.~Wysotzki, ``Risk-sensitive reinforcement learning applied to
  control under constraints,'' \emph{Journal of Artificial Intelligence
  Research}, vol.~24, pp. 81--108, 2005.

\bibitem{Krasowski2022}
H.~Krasowski, J.~Thumm, M.~M{\"u}ller, X.~Wang, and M.~Althoff, ``Provably safe
  reinforcement learning: A theoretical and experimental comparison,''
  \emph{arXiv preprint arXiv:2205.06750}, 2022.

\bibitem{Krasowski2020}
H.~Krasowski, X.~Wang, and M.~Althoff, ``Safe reinforcement learning for
  autonomous lane changing using set-based prediction,'' in \emph{Proc. of the
  Int. Conference on Intelligent Transportation Systems}, 2020.

\bibitem{Isele2018}
D.~Isele, A.~Nakhaei, and K.~Fujimura, ``Safe reinforcement learning on
  autonomous vehicles,'' in \emph{Proc. of the Int. Conference on Intelligent
  Robots and Systems}, 2018, pp. 6162--6167.

\bibitem{Huang2022}
S.~Huang and S.~Onta{\~n}{\'o}n, ``A closer look at invalid action masking in
  policy gradient algorithms,'' in \emph{Proc. of the Int. FLAIRS Conference},
  2022.

\bibitem{Thumm2022}
J.~Thumm and M.~Althoff, ``Provably safe deep reinforcement learning for
  robotic manipulation in human environments,'' pp. 6344--6350, 2022.

\bibitem{Saunders2018}
W.~Saunders, G.~Sastry, A.~Stuhlm{\"u}ller, and O.~Evans, ``Trial without
  error: {T}owards safe reinforcement learning via human intervention,'' in
  \emph{Proc. of the Int. Conference on Autonomous Agents and MultiAgent
  Systems}, 2018, pp. 2067--2069.

\bibitem{Hunt2021}
N.~Hunt, N.~Fulton, S.~Magliacane, T.~N. Hoang, S.~Das, and A.~Solar-Lezama,
  ``Verifiably safe exploration for end-to-end reinforcement learning,'' in
  \emph{Proc. of the Int. Conference on Hybrid Systems: Computation and
  Control}, 2021, article 14.

\bibitem{Alshiekh2018}
M.~Alshiekh, R.~Bloem, R.~Ehlers, B.~K{\"o}nighofer, S.~Niekum, and U.~Topcu,
  ``Safe reinforcement learning via shielding,'' in \emph{Proc. of the AAAI
  Conference on Artificial Intelligence}, 2018, pp. 2669--2678.

\bibitem{Seto1998}
D.~Seto, B.~Krogh, L.~Sha, and A.~Chutinan, ``The simplex architecture for safe
  online control system upgrades,'' in \emph{Proc. of the American Control
  Conference}, 1998, pp. 3504--3508.

\bibitem{Phan2020}
D.~T. Phan, R.~Grosu, N.~Jansen, N.~Paoletti, S.~A. Smolka, and S.~D. Stoller,
  ``Neural simplex architecture,'' in \emph{Proc. of the NASA Formal Methods
  Symposium}, 2020, pp. 97--114.

\bibitem{Schurmann2021}
B.~Sch\"urmann, M.~Klischat, N.~Kochdumper, and M.~Althoff, ``Formal safety net
  control using backward reachability analysis,'' \emph{Transactions on
  Automatic Control}, vol.~67, no.~11, pp. 5698--5713, 2021.

\bibitem{Cheng2019}
R.~Cheng, G.~Orosz, R.~M. Murray, and J.~W. Burdick, ``End-to-end safe
  reinforcement learning through barrier functions for safety-critical
  continuous control tasks,'' in \emph{Proc. of the AAAI Conference on
  Artificial Intelligence}, 2019, pp. 3387--3395.

\bibitem{Marvi2021}
Z.~Marvi and B.~Kiumarsi, ``Safe reinforcement learning: {A} control barrier
  function optimization approach,'' \emph{International Journal of Robust and
  Nonlinear Control}, vol.~31, no.~6, pp. 1923--1940, 2021.

\bibitem{Bastani2021}
O.~Bastani, ``Safe reinforcement learning with nonlinear dynamics via model
  predictive shielding,'' in \emph{Proc. of the American Control Conference},
  2021, pp. 3488--3494.

\bibitem{Wabersich2021}
K.~P. Wabersich and M.~N. Zeilinger, ``A predictive safety filter for
  learning-based control of constrained nonlinear dynamical systems,''
  \emph{Automatica}, vol. 129, 2021, article 109597.

\bibitem{Shao2021}
Y.~S. Shao, C.~Chen, S.~Kousik, and R.~Vasudevan, ``Reachability-based
  trajectory safeguard ({RTS}): {A} safe and fast reinforcement learning safety
  layer for continuous control,'' \emph{Robotics and Automation Letters},
  vol.~6, no.~2, pp. 3663--3670, 2021.

\bibitem{Gillula2012}
J.~H. Gillula and C.~J. Tomlin, ``Guaranteed safe online learning via
  reachability: {T}racking a ground target using a quadrotor,'' in \emph{Proc.
  of the Int. Conference on Robotics and Automation}, 2012, pp. 2723--2730.

\bibitem{Mitchell2005}
I.~M. Mitchell, A.~M. Bayen, and C.~J. Tomlin, ``A time-dependent
  {Hamilton\textendash Jacobi} formulation of reachable sets for continuous
  dynamic games,'' \emph{Transactions on Automatic Control}, vol.~50, no.~7,
  pp. 947--957, 2005.

\bibitem{Selim2022}
M.~Selim, A.~Alanwar, S.~Kousik, G.~Gao, M.~Pavone, and K.~H. Johansson, ``Safe
  reinforcement learning using black-box reachability analysis,'' \emph{IEEE
  Robotics and Automation Letters}, vol.~7, no.~4, pp. 10\,665--10\,672, 2022.

\bibitem{Scott2016}
J.~K. Scott, D.~M. Raimondo, G.~R. Marseglia, and R.~D. Braatz, ``Constrained
  zonotopes: A new tool for set-based estimation and fault detection,''
  \emph{Automatica}, vol.~69, pp. 126--136, 2016.

\bibitem{Althoff2013a}
M.~Althoff, ``Reachability analysis of nonlinear systems using conservative
  polynomialization and non-convex sets,'' in \emph{Proc. of the Int.
  Conference on Hybrid Systems: Computation and Control}, 2013, pp. 173--182.

\bibitem{Kochdumper2020c}
N.~Kochdumper, B.~Sch\"urmann, and M.~Althoff, ``Utilizing dependencies to
  obtain subsets of reachable sets,'' in \emph{Proc. of the Int. Conference on
  Hybrid Systems: Computation and Control}, 2020, article 1.

\bibitem{Kochdumper2019}
N.~Kochdumper and M.~Althoff, ``Sparse polynomial zonotopes: A novel set
  representation for reachability analysis,'' \emph{Transactions on Automatic
  Control}, vol.~66, no.~9, pp. 4043--4058, 2021.

\bibitem{Althoff2008c}
M.~Althoff, O.~Stursberg, and M.~Buss, ``Reachability analysis of nonlinear
  systems with uncertain parameters using conservative linearization,'' in
  \emph{Proc. of the Int. Conference on Decision and Control}, 2008, pp.
  4042--4048.

\bibitem{Roehm2018a}
H.~Roehm, J.~Oehlerking, M.~Woehrle, and M.~Althoff, ``Model conformance for
  cyber-physical systems: A survey,'' \emph{Transactions on Cyber-Physical
  Systems}, vol.~3, no.~3, 2018, article 30.

\bibitem{Koschi2020}
M.~Koschi and M.~Althoff, ``Set-based prediction of traffic participants
  considering occlusions and traffic rules,'' \emph{Transactions on Intelligent
  Vehicles}, vol.~6, no.~2, pp. 249--265, 2020.

\bibitem{Bak2022}
S.~Bak, S.~Bogomolov, B.~Hencey, N.~Kochdumper, E.~Lew, and K.~Potomkin,
  ``Reachability of {K}oopman linearized systems using random {F}ourier feature
  observables and polynomial zonotope refinement,'' in \emph{Proc. of the Int.
  Conference on Computer Aided Verification}, 2022, pp. 490--510.

\bibitem{Grossmann2003}
I.~E. Grossmann and S.~Lee, ``Generalized convex disjunctive programming:
  {N}onlinear convex hull relaxation,'' \emph{Computational Optimization and
  Applications}, vol.~26, no.~1, pp. 83--100, 2003.

\bibitem{Chen2012}
X.~Chen, S.~Sankaranarayanan, and E.~{\'A}brah{\'a}m, ``Taylor model flowpipe
  construction for non-linear hybrid systems,'' in \emph{Proc. of the Real-Time
  Systems Symposium}, 2012, pp. 183--192.

\bibitem{Kochdumper2020b}
N.~Kochdumper and M.~Althoff, ``Reachability analysis for hybrid systems with
  nonlinear guard sets,'' in \emph{Proc. of the Int. Conference on Hybrid
  Systems: Computation and Control}, 2020, article 2.

\bibitem{Makino2003}
K.~Makino and M.~Berz, ``Taylor models and other validated functional inclusion
  methods,'' \emph{International Journal of Pure and Applied Mathematics},
  vol.~4, no.~4, pp. 379--456, 2003.

\bibitem{Dantzig2016}
G.~Dantzig, \emph{Linear Programming and Extensions}.\hskip 1em plus 0.5em
  minus 0.4em\relax Princeton University Press, 2016.

\bibitem{Balas1998}
E.~Balas, ``Disjunctive programming: Properties of the convex hull of feasible
  points,'' \emph{Discrete Applied Mathematics}, vol.~89, no.~1, pp. 3--44,
  1998.

\bibitem{Jaulin2006}
L.~Jaulin, M.~Kieffer, and O.~Didrit, \emph{Applied Interval Analysis}.\hskip
  1em plus 0.5em minus 0.4em\relax Springer Science \& Business Media, 2006.

\bibitem{Deits2015}
R.~Deits and R.~Tedrake, ``Computing large convex regions of obstacle-free
  space through semidefinite programming,'' in \emph{Proc. of the Int. Workshop
  on the Algorithmic Foundations of Robotics}, 2015, pp. 109--124.

\bibitem{Bak2015b}
S.~Bak, Z.~Huang, F.~A.~T. Abad, and M.~Caccamo, ``Safety and progress for
  distributed cyber-physical systems with unreliable communication,''
  \emph{Transactions on Embedded Computing Systems}, vol.~14, no.~4, 2015,
  article 76.

\bibitem{Althoff2015a}
M.~Althoff, ``An introduction to {CORA} 2015,'' in \emph{Proc. of the Int.
  Workshop on Applied Verification for Continuous and Hybrid Systems}, 2015,
  pp. 120--151.

\bibitem{Schulman2017}
J.~Schulman, F.~Wolski, P.~Dhariwal, A.~Radford, and O.~Klimov, ``Proximal
  policy optimization algorithms,'' \emph{arXiv preprint arXiv:1707.06347},
  2017.

\bibitem{Kelly2020}
M.~O'Kelly, H.~Zheng, D.~Karthik, and R.~Mangharam, ``F1tenth: {A}n open-source
  evaluation environment for continuous control and reinforcement learning,''
  \emph{Proceedings of Machine Learning Research}, vol. 123, pp. 77--89, 2020.

\bibitem{Kochdumper2021b}
N.~Kochdumper, F.~Gruber, B.~Sch{\"u}rmann, V.~Ga{\ss}mann, M.~Klischat, and
  M.~Althoff, ``{AROC}: {A} toolbox for automated reachset optimal controller
  synthesis,'' in \emph{Proc. of the Int. Conference on Hybrid Systems:
  Computation and Control}, 2021, article 23.

\bibitem{Thrun2001}
S.~Thrun, D.~Fox, W.~Burgard, and F.~Dellaert, ``Robust {M}onte {C}arlo
  localization for mobile robots,'' \emph{Artificial Intelligence}, vol. 128,
  no. 1-2, pp. 99--141, 2001.

\bibitem{Althoff2017a}
M.~Althoff, M.~Koschi, and S.~Manzinger, ``{CommonRoad}: Composable benchmarks
  for motion planning on roads,'' in \emph{Proc. of the IEEE Intelligent
  Vehicles Symposium}, 2017, pp. 719--726.

\bibitem{Koschi2017a}
M.~Koschi and M.~Althoff, ``{SPOT}: {A} tool for set-based prediction of
  traffic participants,'' in \emph{Proc. of the Intelligent Vehicles
  Symposium}, 2017, pp. 1686--1693.

\bibitem{Wang2021}
X.~Wang, H.~Krasowski, and M.~Althoff, ``{CommonRoad-RL}: {A} configurable
  reinforcement learning environment for motion planning of autonomous
  vehicles,'' in \emph{Proc. of the Int. Intelligent Transportation Systems
  Conference}, 2021, pp. 466--472.

\bibitem{Yuan2022}
Z.~Yuan, A.~W. Hall, S.~Zhou, L.~Brunke, M.~Greeff, J.~Panerati, and A.~P.
  Schoellig, ``{S}afe-{C}ontrol-{G}ym: {A} unified benchmark suite for safe
  learning-based control and reinforcement learning in robotics,'' \emph{IEEE
  Robotics and Automation Letters}, vol.~7, no.~4, pp. 11\,142--11\,149, 2022.

\bibitem{Gassmann2020}
V.~Ga{\ss}mann and M.~Althoff, ``Scalable zonotope-ellipsoid conversions using
  the {E}uclidean zonotope norm,'' in \emph{Proc. of the American Control
  Conference}, 2020, pp. 4715--4721.

\bibitem{Pek2018b}
C.~Pek and M.~Althoff, ``Computationally efficient fail-safe trajectory
  planning for self-driving vehicles using convex optimization,'' in
  \emph{Proc. of the Int. Conference on Intelligent Transportation Systems},
  2018, pp. 1447--1454.

\bibitem{Kochdumper2022}
N.~Kochdumper, ``Extensions of polynomial zonotopes and their application to
  verification of cyber-physical systems,'' Ph.D. dissertation, Technical
  University of Munich, 2022.

\bibitem{Papadimitriou1981}
C.~H. Papadimitriou, ``On the complexity of integer programming,''
  \emph{Journal of the ACM}, vol.~28, no.~4, pp. 765--768, 1981.

\bibitem{Schuermann2018a}
B.~Sch{\"u}rmann, N.~Kochdumper, and M.~Althoff, ``Reachset model predictive
  control for disturbed nonlinear systems,'' in \emph{Proc. of the Int.
  Conference on Decision and Control}, 2018, pp. 3463--3470.

\bibitem{Wetzlinger2021}
M.~Wetzlinger, A.~Kulmburg, and M.~Althoff, ``Adaptive parameter tuning for
  reachability analysis of nonlinear systems,'' in \emph{Proc. of the Int.
  Conference on Hybrid Systems: Computation and Control}, 2021, article 16.

\bibitem{Wetzlinger2022}
M.~Wetzlinger, N.~Kochdumper, S.~Bak, and M.~Althoff, ``Fully-automated
  verification of linear systems using inner-and outer-approximations of
  reachable sets,'' \emph{arXiv preprint arXiv:2209.09321}, 2022.

\end{thebibliography}

\end{document}